\pgfplotsset{compat=1.18} % Set compatibility level
\tikzset{
	state/.style={
		rectangle,
		rounded corners,
		draw=black, thick,
		minimum height=2em,
		inner sep=2pt,
		text centered,
	},
}
\algnewcommand{\IIf}[1]{\State\algorithmicif\ #1\ \algorithmicthen}
\algnewcommand{\EndIIf}{\unskip\ \algorithmicend\ \algorithmicif}
\newcommand\paraspace{\vspace*{0.5ex}}
\providecommand\parab[1]{\paraspace\noindent\textbf{#1}}
\newcommand{\sysname}{$\mathcal{X}$Plain\xspace}
\definecolor{mygreen}{RGB}{208, 244, 222}
\definecolor{myp}{RGB}{253, 226, 228}
\definecolor{myb}{RGB}{198, 222, 241}
\definecolor{mymagenta}{RGB}{255,0,255} 
\definecolor{myo}{RGB}{255,200,0}
\definecolor{pastelblue}{RGB}{102,153,204}
\definecolor{forestgreen}{rgb}{0.13, 0.65, 0.13}
\newif\ifshowproof
\renewcommand{\subsectionautorefname}{\S\@gobble}
\renewcommand{\sectionautorefname}{\S\@gobble}
\renewcommand{\subsubsectionautorefname}{\S\@gobble}
\title{Towards Safer Heuristics With $\mathcal{X}$Plain}
\author{Pantea Karimi\textsuperscript{1*}, Solal Pirelli\textsuperscript{2*}, Siva Kesava Reddy Kakarla\textsuperscript{3}, Ryan Beckett\textsuperscript{3}, Santiago Segarra\textsuperscript{4}, Beibin Li\textsuperscript{3}, Pooria Namyar\textsuperscript{5}, Behnaz Arzani\textsuperscript{3}}
\affiliation{%
  \institution{\textsuperscript{1}MIT \quad \textsuperscript{2}EPFL, Sonar \quad \textsuperscript{3}Microsoft Research \quad
  \textsuperscript{4}Rice University \quad
  \textsuperscript{5}University of Southern California}
  \country{}
}
\begin{document}

\renewcommand{\shortauthors}{P. Karimi \textit{et al.}}
\acmYear{2024}\copyrightyear{2024}
\acmConference[HOTNETS '24]{The 23rd ACM Workshop on Hot Topics in Networks}{November 18--19, 2024}{Irvine, CA, USA}
\acmBooktitle{The 23rd ACM Workshop on Hot Topics in Networks (HOTNETS '24), November 18--19, 2024, Irvine, CA, USA}
\acmDOI{10.1145/3696348.3696884}
\acmISBN{979-8-4007-1272-2/24/11}

%%%%%%%%%%%%%  ABSTRACT GOES HERE %%%%%%%%%%%%%%
\begin{abstract}

Many problems that cloud operators solve are computationally expensive, and operators often use heuristic algorithms (that are faster and scale better than optimal) to solve them more efficiently.
Heuristic analyzers enable operators to find when and by how much their heuristics underperform. However, these tools do not provide enough detail for operators to mitigate the heuristic's impact in practice: they only discover a \emph{single input instance} that causes the heuristic to underperform (and not the full set) and they do not \emph{explain why}.

% and they do not uncover \emph{why} it underperforms in these cases.

We propose $\mathcal{X}$Plain, a tool that extends these analyzers and helps operators understand when and why their heuristics underperform. 
We present promising initial results that show such an extension is viable. 
%We further discuss open questions, paving the way for future research in this space. 
%This paper presents an innovative approach to improving heuristic analysis in network systems. We propose the development of a general abstraction (intermediate representation) and enhanced explanations to address the limitations of existing heuristic analysis solutions. Our approach aims to make heuristic analysis more usable and understandable, facilitating better system optimization and decision-making. Initial results demonstrate the potential of our approach in generating meaningful and actionable explanations.
\end{abstract}

\keywords{Heuristic Analysis, Explainable Analysis, Domain-Specific Language}
\maketitle

% \balance
%\section{Introduction}

\section{Introduction}

Operators use heuristics (approximate algorithms that are faster or scale better than their optimal counterparts) in production systems to solve computationally difficult or expensive problems. 
These heuristics perform well across many typical instances, but they can break in unexpected ways when network conditions change~\cite{metaopt,venkatarxive,venkatstarvation,mina}.
Our community has developed tools that enable operators to identify such situations~\cite{metaopt,venkat1,venkatarxive,venkatstarvation,venkatNSDI24}. 
These tools find the ``performance gap'' of one heuristic algorithm compared to another heuristic or the optimal~---~they identify an example instance of an input which causes a given heuristic to underperform.

For example, MetaOpt~\cite{metaopt} describes a heuristic deployed in Microsoft's wide area traffic engineering solution and shows it could underperform by $30\%$  (see~\autoref{sec::heuristicanalysis}).
This means the company would either have to overprovision their networks to support $30\%$ more traffic, drop that traffic, or delay it. 
%As another example, the work in~\cite{venkatstarvation} shows we can always find a scenario where delay-based congestion control algorithms starve: this implies that if an application uses these algorithms, it can always run into a situation where it underperforms, degrading the user experience.

The potential benefit of heuristic analyzers is clear: they allow operators to quantify the risk of heuristics they want to deploy. 
Although these heuristic analyzers have already shed light on the performance gap of many deployed heuristics, they are still in their nascent stage and have limited use for operators who do not have sufficient expertise in formal methods and/or optimization theory. 
There are crucial features missing: operators have to (1) model the heuristics they want to analyze in terms of mathematical constructs these tools can support and (2) manually analyze the outputs from these tools to understand \emph{how} to fix their heuristics or their scenarios~---~the tool only provides a performance gap and an example input that caused it. 
They do not produce the full space of inputs that can cause large gaps nor describe why the heuristic underperformed in these instances.

The latter problem limits the operator's ability to use the output of these tools to fix the problem and to either improve the heuristic, create an alternative solution for when it underperforms, or cache the optimal solution for those instances. 
In our earlier examples, the operator has to look at the tool's example demand matrix to understand why the heuristic routes $30\%$ less traffic than the optimal. % or to look at the counter-examples to understand how a delay-based congestion control algorithm would end up in a ``starved'' state.

The state of these heuristic analyzers today is reminiscent of the early days of our community's exploration of network verifiers and their potential to help network operators configure and manage their networks. 
In the same way that network verifiers enabled operators to identify bugs in their configurations~\cite{atomic, hsa, veriflow, apkeep, nod, anteater, deltanet, fibverifier, minesweeper, era, arc, campion, groot}, a heuristic analyzer can help them find the performance gap of the algorithms they deploy. 
Tools that allow operators to leverage heuristic analyzers more easily, identify \emph{why} the heuristics underperform, and devise solutions to remediate the issue serve a similar purpose to the tools our community crafted that \emph{explained} the impact of configuration bugs~\cite{campion, selfstarter, secguru, tian19safely} (by producing all sets of packets that the bug impacted and the configuration lines that caused the impact).

We propose $\mathcal{X}$Plain~---~our vision for a ``generalizer'' that can augment existing heuristic analyzers and help operators either improve their heuristics (by helping them find \emph{why} the heuristics underperform) or use them more safely (by finding all regions where they underperform).

We propose a domain-specific language (\autoref{sec::dsl}), which allows us to concretely describe the heuristic's behavior and that of a benchmark we want to compare it to for automated analysis. It is rooted in network flow abstraction, which allows us to model the behavior of many heuristics that operators use in today's networks, including \emph{all} those from~\cite{metaopt,venkatarxive}. Our compiler converts inputs in this language into an existing heuristic analyzer. Our \emph{efficient} iterative algorithm uses the analyzer, extrapolates from the adversarial inputs it finds, and finds all adversarial subspaces where the heuristic underperforms. We then use our language again and visualize \emph{why} (i.e., the different decisions the heuristic made compared to the optimal that caused it to underperform) the heuristic underperforms in these cases.

We also discuss open questions and a possible approach built on the solutions we propose in this work to uncover what \emph{properties} in the input or the problem instance cause the heuristic to underperform (\autoref{sec:generalizer}).

% \begin{itemize}
% \item We identify a new avenue of research that focuses on \emph{comprehensive} heuristic analysis.
% \item We chart a path towards building such a tool. For example, we show that we can extend the network flow abstraction~\cite{bertsimas} to model the behavior of many heuristics operators use in today's networks and systems: we can use it to explain why such heuristics underperform and when.
% \item We describe preliminary results based on heuristics described in earlier heuristic analysis papers~\cite{metaopt}. Our findings suggest this research direction is promising.
% \end{itemize} 
Our proof-of-concept implementation of this idea uses MetaOpt~\cite{metaopt} as the underlying heuristic analyzer because it is open source. But our proposal applies to other heuristic analyzers such as~\cite{venkat1, venkatarxive, venkatNSDI24} as well. 

%\added[id=PK]{We build our tool with  MetaOpt~\cite{metaopt} as the underlying heuristic analyzer because its code is open source and it relies on optimization theoretic concepts that provide us with a base structure to work with. However, the approach we propose is not limited to MetaOpt. Our broader vision applies to a range of heuristic analyzers, for instance, analyzers such as ~\cite{venkat1,venkatarxive,venkatNSDI24}, can also be integrated into our framework as well.}
% Our discussions focus \added[id=PK]{ on using} MetaOpt~\cite{metaopt} as the underlying heuristic analyzer because its code is open source and because it relies on optimization theoretic concepts that provide us with a base structure to work with. But our broader vision applies to other analyzers such as~\cite{venkat1,venkatarxive,venkatNSDI24} as well.

\begin{figure*}
 \centering  
    % First subfigure  
    \begin{subfigure}{0.4\textwidth}  
        \centering  
	\begin{tikzpicture}[  
	scale=0.6, transform shape,  
	node distance=0.75cm,  
	font=\small,  
	every edge/.style={draw, -Stealth},  
	every edge quotes/.style={auto, font=\footnotesize},  
	state/.style={circle, draw}  
	]  
	\begin{scope}  
	\node[state, fill=myb] (1) {1};  
	\node[state, fill=myb, right=of 1] (2) {2};  
	\node[state, fill=myb, right=of 2] (3) {3};  
	\node[state, fill=myb, below=of 1] (4) {4}; 
	\node[state, fill=myb, right=of 4] (5) {5};  
	
	\path  
	(1) edge[above, "100"] (2)  
	(2) edge[above, "100"] (3)  
	(4) edge[below, "50"] (5)  
	(1) edge[below, "50"] (4)  
	(5) edge[below, "50"] (3)  
	;  
	\end{scope}  
	
	\node[anchor=west] at ([yshift=-0.5cm]3.east){  
	\begin{tabular}{cc | cc | cc}  
	\multicolumn{2}{c}{Demand} & \multicolumn{2}{c}{DP (thresh = 50)} & \multicolumn{2}{c}{OPT} \\  
	\hline  
	src-dest & value & path & value & path & value \\  
	\hline  
	% 1-3 & 50 & 1-2-3 & 50 & 1-4-5-3 & 50 \\  
	% 1-2 & 100 & 1-2 & 50 & 1-2 & 100 \\  
	% 2-3 & 100 & 2-3 & 50 & 2-3 & 100 \\ 
        $1$$\rightsquigarrow$$3$ & 50 & $1$-$2$-$3$ & 50 & $1$-$4$-$5$-$3$ & 50\\
        $1$$\rightsquigarrow$$2$ & 100 & $1$-$2$ & 50 & $1$-$2$ & 100\\
        $2$$\rightsquigarrow$$3$ & 100 & $2$-$3$ & 50 & $2$-$3$ & 100\\
	\hline
	& & Total DP & \textbf{150} & Total OPT & \textbf{250}  \\
	\cline{3-6}
	\end{tabular}  } ; 
	\end{tikzpicture}  
\caption{\small DP from~\cite{metaopt}. (left) Topology. (right) A set of demands and their flow allocations using the DP heuristic and the optimal (OPT) solution. \label{f:pinning_issues}}%\vspace{-0.05in} 
    \end{subfigure} 
    % Second subfigure  
    \hspace{10pt}
    \begin{subfigure}{0.28\textwidth}  
        \centering  
        \begin{tikzpicture}[scale=0.8, transform shape,  
	node distance=0.75cm,  
	font=\small]
        \node[ state, text width=6cm,
				node distance=7.0cm,
				anchor=center] (B)
				{%
					\begin{algorithmic}[t]
						\OuterVar : $d_k$ requested rate of demand $k$ \EndOuterVar
						\Input : $P_k$ paths for demand $k$ \EndInput
						\Input : $\hat{p}_k$ shortest path \EndInput
						\Input : $T_d$ demand pinning threshold \EndInput
						\ForAll{{$\text{demand}~k \in \mathcal{D}$}}
						\State ${\sf ForceToZeroIfLeq}(d_k - f^{\hat{p}_k}_k, d_k, T_d)$
						\EndFor
						\State {\sf MaxFlow()}
					\end{algorithmic}
				};
        \end{tikzpicture}  
        \caption{\small DP in MetaOpt.}  
        \label{fig:metaoptEncoding}  
    \end{subfigure}
    \begin{subfigure}{0.28\textwidth}
    \centering  
    \begin{tikzpicture}[scale=0.7, transform shape,
	font=\small]
\node[state, text width=6cm]
{%  
	\begin{algorithmic}[t]  
	\Statex \textbf{OuterVar}: ${\bf Y}$(size of balls)  
	\Statex \textbf{Input}: ${\bf C}$(capacity of bins)  
	\ForAll{$\text{ball}~~i \textbf{ and } \text{bin}~~j$}  
	\begin{align*}  
	\hspace{0.5cm}&{\bf r}_{ij} = {\bf C}_j - {\bf Y_{i}} - \sum\limits_{\text{ball}~u < i}{{\bf x}_{uj}^d}\\  
	&f_{ij} = {\sf AllLeq([-r^d_{ij}]_d, 0)} \\  
	&\gamma_{ij} = {\sf AllEq([x^d_{ik}]_{d, k<j}, 0)} \\  
	&\alpha_{ij} = {\sf AND}(f_{ij}, \gamma_{ij}) \\  
	\hspace{0.5cm}&{\sf IfThenElse}({\sf \alpha_{ij}}, [({\bf x}_{ij}, {\bf Y}_i)], [({\bf x}_{ij}, 0)])  
	\end{align*}  
	\EndFor  
	\end{algorithmic}  
};  
\end{tikzpicture}   
		\caption{\small Heuristic for VBP in MetaOpt.}  
		\label{fig:metaoptVBP}
    \end{subfigure}
    \caption{Example heuristics and their encoding in MetaOpt (sub-figures (b) and (c)). Heuristic in sub-figure (b) forces the demands less than a threshold to be pinned and then solves a flow maximization problem, heuristic in sub-figure (c) assigns the first bin that can fit the ball.\label{fig:DP}}
\end{figure*}

\definecolor{ballcolor}{rgb}{0.71, 0.11, 0.5}
\begin{figure}[h!]
    \centering
    \begin{tikzpicture}[scale=0.6]

    % Define ball sizes and positions
    \def\balls{{0.3, 0.8, 0.2, 0.4, 0.7, 0.7, 0.15, 0.85, 0.25, 0.25, 0.3, 0.75, 0.75, 0.6, 0.12, 0.4, 0.4}}
    % optimal needs 8 bins, FF needs 9
    \def\spacing{0.77} % Reduce the spacing between balls

    % Draw balls with gradient fill
    \foreach \i in {0,...,13, 14, 15, 16} {
        \pgfmathsetmacro{\size}{\balls[\i]}
        % Adjust position for next ball with variable spacing
        \if \i == 11
            \pgfmathsetmacro{\pos}{\i * \spacing - 0.1}
        \else
            \pgfmathsetmacro{\pos}{\i * \spacing}
        \fi
        \shade[ball color=ballcolor] (\pos,-0.75) circle (\size/2); % Draw each ball at the specified position
        % Add small text saying the ball size in the middle of the ball
        % Add small text saying the ball size above or below each ball
        \ifodd\i
            \node at (\pos,-1.4) {\scriptsize \size}; % Below for odd-indexed balls
        \else
            \node at (\pos,-0.2) {\scriptsize \size};  % Above for even-indexed balls
        \fi
    }

    \end{tikzpicture}
    \caption{Example adversarial instance for FF with equal-sized bins with size of 1; the optimal uses 8 bins and the heuristic 9.
    }
    \vspace{-20 pt}
    \label{fig:binpacking}
\end{figure}
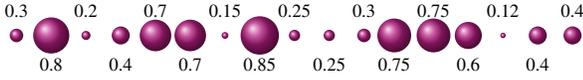

\section{What is heuristic analysis?}
\label{sec::heuristicanalysis}
Heuristic analyzers~\cite{metaopt, venkat1, venkatarxive} take a \emph{heuristic model} and a \emph{benchmark model} (e.g., the optimal) as input. Their goal is to characterize the performance gap of the heuristic compared to the benchmark. 
Recent tools~\cite{metaopt,venkatarxive} use optimization theory or first-order logic to solve this problem and return a single input instance that causes the heuristic to underperform.

% Heuristic analyzers~\cite{metaopt, venkat1, venkatarxive} take a \emph{heuristic model} and a \emph{benchmark model} (e.g., a model of the optimal solution) as input and characterize the performance gap of the heuristic compared to the benchmark. 
% Two recent examples, MetaOpt~\cite{metaopt} and Virelay~\cite{venkatarxive} use optimization theory or first order logic to solve this problem and return a single input instance that causes the heuristic to underperform.

%\noindent \textbf{MetaOpt~\cite{metaopt}} is based on the idea of Stackelberg games where a leader controls the input to one or more followers to maximize its own pay-off. 
%The leader controls the input to the benchmark and heuristic and its pay-off is the difference between their objectives. 
%The heuristic and benchmark followers, in turn, maximize their own pay-off given inputs from the leader. 
%The user provides optimization models of the benchmark and heuristic to MetaOpt through helper functions MetaOpt provides. 
%MetaOpt uses these models to solve the Stackelberg game through bi-level optimization.

%\noindent \textbf{Virley~\cite{venkatarxive, venkatstarvation}} uses discrete-event simulation and SMT solvers to find adversarial examples for resource allocation heuristics. Like MetaOpt, Virley provides helper functions (templates) for users to describe their heuristics, the system, and the state transitions they introduce. 
%These helper functions describe finite state machines that Virley feeds into SMT solvers. 

\noindent \textbf{Example heuristics}  from these work include:

\noindent \textit{Demand Pinning (DP)} was deployed in Microsoft's wide area network. DP is a heuristic for the traffic engineering problem. The optimal algorithm assigns traffic (demands) to paths and maximizes the total flow it routes through the network without exceeding the network capacity. 
Operators use DP to reduce the size of the optimization problem they solve. 
DP first filters all demands below a pre-defined threshold and routes them through (pins them to) their shortest path. 
It then routes the remaining demands optimally using the available capacity (see~\autoref{fig:DP}).

MetaOpt authors modeled DP directly as an optimization problem. They also provided a number of helper functions that allow operators to model it more easily (\autoref{fig:metaoptEncoding}).
MetaOpt solves a bi-level optimization that produces the performance gap and demand that causes it (the flow in~\autoref{f:pinning_issues}). 
It is easy to see what is missing: it is up to the operator to examine the single output and find why DP underperformed. 
DP is amenable to such manual analysis (see~\cite{metaopt}), but not all heuristics are. It is also hard for operators to extrapolate from this example adversarial input and find all other regions of the input space where DP may underperform.
%The adversarial input MetaOpt produces do not cover all instances where the heuristic underperforms.
These limitations are exacerbated as we move to larger problems with more demands, where it is harder to pinpoint how a heuristic's decision to route a particular demand interferes with its ability to route others.

\textit{Vector bin packing (VBP)} places multi-dimensional balls into multi-dimensional bins and minimizes the number of bins in use. 
Operators use VBP in many production systems, such as to place VMs onto servers~\cite{vmlifetime}. 

The VBP problem is APX-hard~\cite{vbpAPX}. 
One heuristic that solves VBP is first-fit (FF), which greedily places an incoming ball in the first bin it fits in. \autoref{fig:metaoptVBP} shows how we can encode it in MetaOpt. 

MetaOpt produces the adversarial ball sizes 1\%, 49\%, 51\%, 51\% (as a percentage of the bin size) for an example with 4 balls and 3 equal-sized bins (we use single-dimensional balls)~---~the optimal uses 2 bins while FF uses 3 (we show a more complex version in~\autoref{fig:binpacking}). Once again, operators have to reason through this example to identify \emph{why} FF underperforms and what \emph{other inputs} cause the same problem. 
This is harder in FF and other VBP heuristics, such as best fit or first fit decreasing, as evidenced by the years of research by theoreticians in this space~\cite{theoryVBP}.

In this paper, we use the DP and VBP as running examples. These examples are representative of the heuristics prior work has studied~\cite{metaopt,venkatarxive} (the scheduling examples Virley studies are conceptually similar to VBP, and we think our discussions directly translate to those use-cases).

Prior work~\cite{mina} shows that, using a single adversarial instance, it is difficult to understand why a heuristic underperformed. 
It is even harder to generalize from why an adversarial input causes the heuristic to underperform on a single problem instance (or a few instances) to what \emph{properties} in the input and the problem instance cause it to underperform. %This motivates a more comprehensive heuristic analysis.

\section{The case for comprehensive analysis}
\label{sec::case}

Prior work~\cite{metaopt,venkatNSDI24, mina} show explaining adversarial inputs can have benefits: we can improve DP's performance gap by an order of magnitude and produce congestion control algorithms that meet pre-specified requirements~\cite{venkatNSDI24}. But these results require manual analysis~\cite{metaopt} or problem-specific models~\cite{venkatNSDI24, mina}. 

We see an opportunity for a new tool that enables operators to identify the full \emph{risk surface} of the heuristic (the set of inputs where the heuristic underperforms) and to identify \emph{why} the heuristic underperforms automatically. 
It can produce (1) a description of the entire area(s) where a heuristic has a high performance gap; or (2) a description of what choices the heuristic makes that cause it to underperform (the difference in the actions of the heuristic and the optimal can point us to \emph{why} the heuristic underperforms). 
Through these outputs, these tools can make it safer for operators to use heuristics in practice as they can mitigate the cases where they underperform and maybe even design safer heuristics.

There are three levels of information we can provide: (1) for \emph{a given problem instance}, the \emph{sets of inputs} that cause the heuristic to underperform; (2) for \emph{a given problem instance}, a reason as to \emph{why} the heuristic underperforms in each contiguous region of the adversarial input space; and (3) for \emph{the general case}, the \emph{characteristics} of the inputs and problem instances that cause the heuristic to underperform.

Take DP as an example. The ideal tool would produce:

\noindent \textbf{Type 1.} For a given topology, the adversarial input sets are of the form $\cup D_i$ where each $D_i \in \mathbb{R}^n_+$ represents a contiguous subspace of the n-dimensional (8-dimensional in ~\autoref{f:pinning_issues} for 8 demands) space.

For a given $D_i$: (a) an entry $d_{ij} = T - \epsilon$ (here $T$ is the demand pinning threshold and $\epsilon$ is a small positive value) if there are multiple paths between the nodes $i$ and $j$ (we call a demand $d: d \le T$ a pinnable demand); (b) for all other $uv$ where a portion of the path between the nodes $u$ and $v$ intersects with the shortest path of a pinnable demand we have $d_{uv} \ge \min({\mathcal{C}_{uv}} - T)$. Here, the set $\mathcal{C}_{uv}$ contains the capacity of all links on the path between $u$ and $v$. The adversarial instance in our example in~\autoref{f:pinning_issues} fits this behavior.

\noindent \textbf{Type 2.} For a given topology, DP routes pinned demands on their shortest paths, but the optimal routes them through alternate paths. We expect the pinned demands in each contiguous subspace would all have a common pattern where they have the same shortest path, and DP does shortest-path routing for these demands, whereas the optimal does not. 

\noindent \textbf{Type 3.} The heuristic's performance is worse when the length of the shortest path of the pinned demands is longer or the capacity of the links along these paths is lower~---~pinned demands limit the heuristic's ability to route other demands.

\section{Challenges}
\label{sec:challenges}

It is hard to arrive at low-level models of a heuristic in order to use existing analyzers~\cite{metaopt,venkatarxive,venkatNSDI24}, and operators need to have expertise in either formal methods~\cite{venkatarxive,venkatNSDI24} or optimization theory~\cite{metaopt, boyd2004convex} to do so. 
We see an analogy with writing imperative programs in assembly code: we can write any program in assembly but it takes time, has a high risk of being buggy, and makes code reviews (i.e., explanations) difficult.

Low-level models operate over variables and constructs that are often hard to connect to the original problem (``Greek letters'' and ``auxiliary variables'' instead of ``human-readable'' text). 
To model the first fit behavior, MetaOpt uses an auxiliary, binary variable $\alpha_{ij}$ that captures whether bin $j$ is the first bin where ball $i$ fits in, and sets its value through:

{\small 
	\begin{align*}
	&\alpha_{ij} \leq \frac{{f}_{ij} + \sum_{\{k \in \textsc{\textsf{bins}} \mid k<j\}}(1- {f}_{ik})}{j} \quad \forall i \in \textsc{\textsf{balls}},\ \forall j \in \textsc{\textsf{bins}} \\
 	&\sum\limits_{j \in \textsc{\textsf{bins}}}\alpha_{ij}  = 1 \quad \forall i \in \textsc{\textsf{balls}}.
	\end{align*}
}

It is hard to derive an explanation from such a model and harder still to connect it to how the heuristic works to explain its behavior.
We need a better and more descriptive language to encode the behavior of the heuristic. We also need to:

\noindent \textbf{Find adversarial subspaces and validate them.} 
These are subspaces of the input space where the inputs that fall in those subspaces cause the heuristic to underperform. 
To find them, we need a search algorithm that iterates and extrapolates from the adversarial inputs existing analyzers find (similar to the all-SAT problem~\cite{allsat1, allsat2, allsat3}, the input space is large, and we cannot blindly search it to find adversarial inputs~\cite{metaopt}).
%: the algorithm finds a subspace, removes it from the input space, runs the analyzer again, and repeats until it no longer can find an adversarial input in the remaining input space. %Heuristic analyzers are slow and we need to find a way to execute this process efficiently. 
Once we find a potential "adversarial subspace," we should validate it: we need to check whether the heuristic's performance gap is higher for inputs that belong to the adversarial subspace compared to those that do not with statistical significance. %This requires careful thought. Most statistical tests require independent samples, but the samples from these subspaces (the adversarial subspaces and their negation) are dependent.

\noindent \textbf{Find why the inputs in each subspace cause bad performance.} It is reasonable to assume the inputs in the same contiguous adversarial subspace trigger the same ``bad behavior'' in the heuristic. 
To find and explain these behaviors, we need to automatically reason through the heuristic's actions and compare them to those of the benchmark: we need to \emph{concretely} encode the heuristic and benchmark's choices as part of the language we design for our solution. 
The challenge is to ensure this language applies to a broad range of problems and is amenable to the types of automation we desire.
% \pk{I don't think we need this assumption. Later on we're seeing the actions taken by the heuristic and the optimal and we see if there's a common pattern }

\noindent \textbf{Generalize beyond a single instance.} 
Perhaps the hardest challenge is to generalize from the instance-based explanations to one that applies to the heuristic's behavior in the general case: we have to find a valid extrapolation from these instance-based examples and discover patterns that apply to the heuristic's behavior across different problem instances.

%\behnaz{TODO: merge this comment into the text} \added[id=PK]{ We view the set of different decisions made by a heuristic compared to the optimal as the reason why a heuristic underperforms.}
\section{The $\mathcal{X}$Plain proposal}

\begin{figure*}[!t]
% \vspace{-5 pt}
    \centering
\begin{tikzpicture}[every arrow/.style={}, scale = 0.88]
    % Nodes
    \node[] (user) at (0,1.5) { \includegraphics[width=.7cm]{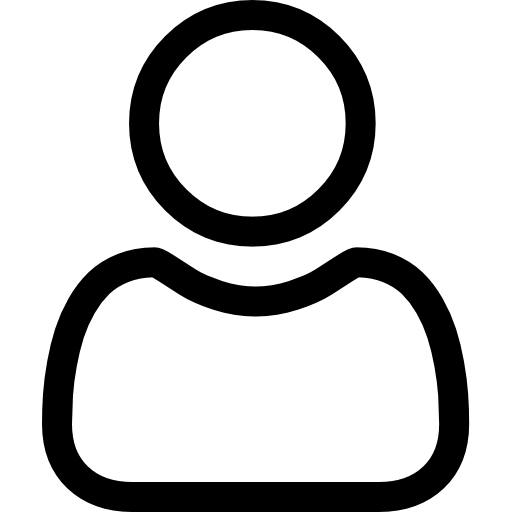} };
    \node[draw, minimum width=1cm, minimum height=.7cm, rounded corners, fill=gray!20, semithick, text=black, drop shadow={shadow xshift=0.5mm, shadow yshift=-0.5mm, opacity=0.5}] (dsl) at (2.5, 1.5) { DSL };
    \node[draw, minimum width=1cm, minimum height=.9cm, align=center, rounded corners, fill=gray!20, semithick, text=black, drop shadow={shadow xshift=0.5mm, shadow yshift=-0.5mm, opacity=0.5}] (instance) at (4, 0.1) {Instance \\ Generator};
    \node[draw, minimum width=2cm, minimum height=.7cm, rounded corners, fill=gray!20, semithick, text=black, drop shadow={shadow xshift=0.5mm, shadow yshift=-0.5mm, opacity=0.5}] (compiler) at (5, 1.5) { Compiler };
    % \node[] (compiler) at (5.5,1.5) { \includegraphics[width=.7cm]{} };
    %
    \node[draw, minimum width=2cm, minimum height=.9cm, rounded corners, align=center, fill=white, semithick, text=black, drop shadow={shadow xshift=0.5mm, shadow yshift=-0.5mm, opacity=0.5}] (metaopt) at (7.75, 1.5) { Heuristic\\ Analyzer};
    \node[draw, minimum width=2cm, minimum height=.9cm, align=center, rounded corners, fill=white, semithick, text=black, drop shadow={shadow xshift=0.5mm, shadow yshift=-0.5mm, opacity=0.5}] (sample) at (11.3, 1.5) { Adversarial\\ Sample};
    \node[draw, minimum width=2cm, minimum height=.11cm, align=center, rounded corners, fill=gray!20, semithick, text=black, drop shadow={shadow xshift=0.5mm, shadow yshift=-0.5mm, opacity=0.5}] (space) at (9.4, 0) { Adversarial\\ Subspace\\ Generator};
    \node[draw, minimum width=2cm, minimum height=1cm, align=center, rounded corners, fill=gray!20, semithick, text=black, drop shadow={shadow xshift=0.5mm, shadow yshift=-0.5mm, opacity=0.5}] (significance) at (12.7, -0.33) { Significance\\ Checker};
    \node[draw, minimum width=2cm, minimum height=.9cm, align=center, rounded corners, fill=gray!20, semithick, text=black, drop shadow={shadow xshift=0.5mm, shadow yshift=-0.5mm, opacity=0.5}] (explainer) at (15.5, -0.85) {Explainer};
    \node[draw, minimum width=2cm, minimum height=.7cm, align=center, rounded corners, fill=gray!20, semithick, text=black, drop shadow={shadow xshift=0.5mm, shadow yshift=-0.5mm, opacity=0.5}] (generalizer) at (15.5, 1.5) {Generalizer};

    % \draw[->] (user.east) -- ($(dsl.west)+(-0.1,0)$) node[midway, above] {encode};
    \draw[->, semithick, draw=black, rounded corners, >={Stealth[round]}] 
    (user.east) -- ($(dsl.west)+(-0.1,0)$) 
    node[midway, above, sloped, text=black, font=\small\bfseries] {Encode};

    \draw[->, semithick, draw=black, rounded corners, >={Stealth[round]}, ] (dsl.east) -- ($(compiler.west)+(-0.1,0)$);
    \draw[->, semithick, draw=black, rounded corners, >={Stealth[round]}] (compiler.east) -- ($(metaopt.west)+(-0.1,0)$);
    \draw[->, semithick, draw=black, rounded corners, >={Stealth[round]}] (metaopt.east) -- ($(sample.west)+(-0.1,0)$);
    \draw[->, semithick, draw=black, rounded corners, >={Stealth[round]}] ($(space.west)+(0,0)$) -- ($(metaopt.south |- space.west) +(0,0)$) -- ($(metaopt.south)$);

    \node[align=center, xshift=-1.3cm, yshift=0.4cm] (exclude) at (space.west) {\footnotesize Exclude\\ \footnotesize Subspace};

    \draw[->, semithick, draw=black, rounded corners, >={Stealth[round]}] (sample.south) -- ($(sample.south)!(space.east)!(sample.south |- space.east) +(0,0)$) -- ($(space.east)+(0.1,0)$);
    
    \draw[->, semithick, draw=red!45!black, rounded corners, >={Stealth[round]}] ($(instance.east)$) -- ($(compiler.south |- instance.east) +(0.4,0)$) -- ($(compiler.south)+(0.4, -0.1)$);
    \draw[->, semithick, draw=red!45!black, rounded corners, >={Stealth[round]}] ($(dsl.south) - (-0.1, 0)$) -- ($(dsl.south |- instance.west) - (-0.1, 0)$) -- ($(instance.west)+(-0.1, 0)$);
    \draw[->, semithick, draw=red!45!black, rounded corners, >={Stealth[round]}] ($(dsl.south)+(-0.1,0)$) -- ($(dsl.south |- instance.west) +(-0.1,-1.1)$) -- ($(explainer.west)+(-0.1, -0.2)$);
    
    % \draw[double, double distance=2pt, -{Classical TikZ Rightarrow[length=2mm]}] ($(significance.west)+(-1.97,0)$) -- ($(significance.west)+(-0.1,0)$);
    \draw[->, semithick, draw=black, rounded corners, >={Stealth[round]}] ($(significance.east)+(-3.3,0)$) -- ($(significance.west)+(-0.1,0)$);
    
    \draw[->, semithick, draw=red!45!black, rounded corners, >={Stealth[round]}]  ($(significance.east)+(0,-0.3)$) -- ($(explainer.west)+(-0.1,0.21)$);
    \draw[->, semithick, draw=red!45!black, rounded corners, >={Stealth[round]}]  ($(explainer.north)+(0,0)$) -- ($(generalizer.south)+(0,-0.1)$) node[midway, right] {Type 2};
    \draw[->, semithick, draw=black, rounded corners, >={Stealth[round]}] ($(significance.north)$) -- ($(significance.north |- generalizer.west)$) -- ($(generalizer.west)+ (-0.1,0)$)  node[midway, above] {Type 1};
    \draw[double, double distance=2pt, semithick, -{Classical TikZ Rightarrow[length=1.5mm]}]  ($(generalizer.east)$) -- ($(generalizer.east)+(0.5,0)$) node[at end, above] {Type 3};

\end{tikzpicture}
% \vspace{-5 pt}
    \caption{$\mathcal{X}$Plain: the system architecture we propose to extend existing heuristic analyzers.} %Users encode the heuristic and benchmark algorithm into the DSL which the compiler converts into low-level optimization constructs and passes to MetaOpt. MetaOpt then produces a sample adversarial instance which causes the heuristic to underperform. The adversarial subspace generator uses this instance to find samples around it that also cause the heuristic to underperform. The explainer then shows how the heuristic's decisions deviate from the benchmark in these samples.}
    
    %User encodes the heuristic and optimal (or another heuristic) using the DSL, which the compiler compiles down to low-level optimization constraints into MetaOPT to produce a sample input which has the worst-case performance gap. The adversarial subspace generator module uses this sample to generalize to a region around the sample which still has.  }
    \label{fig::proposal}
\end{figure*}

% met and unmet sink nodes
% demands are split nodes
%paths are also split nodes
%edges are capacitated split nodes
% I are choice nodes
%b's are capacitated split nodes
% occupancy is sink node
\newcommand{\sink}{\raisebox{2.3pt}{\protect\tikz[baseline=(char.base)]{\protect\node[shape=rectangle,draw, fill=forestgreen!20, minimum size=1mm] (char){};}}}
\newcommand{\source}{\raisebox{2.3pt}{\protect\tikz[baseline=(char.base)]{\protect\node[shape=rectangle,draw, fill=mymagenta!10, minimum size=1mm] (char){};}}}
\newcommand{\sourcesplit}{\raisebox{2.3pt}{\protect\tikz[baseline=(char.base)]{\protect\node[shape=rectangle,draw, left color=mymagenta!30, shading angle=45, right color=blue!10, anchor=north, rectangle, minimum size=1mm] (char){};}}}
\newcommand{\sourcepick}{\raisebox{2.3pt}{\protect\tikz[baseline=(char.base)]{\protect\node[shape=rectangle,draw, left color=mymagenta!30, shading angle=45, right color=orange!20, minimum size=1mm] (char){};}}}
\newcommand{\splitnode}{\raisebox{2.3pt}{\protect\tikz[baseline=(char.base)]{\protect\node[shape=rectangle,draw, fill=blue!10, minimum size=1mm] (char){};}}}
\newcommand{\regularCopy}{\raisebox{2.3pt}{\protect\tikz[baseline=(char.base)]{\protect\node[shape=rectangle,draw, fill=yellow!20, minimum size=1mm] (char){};}}}
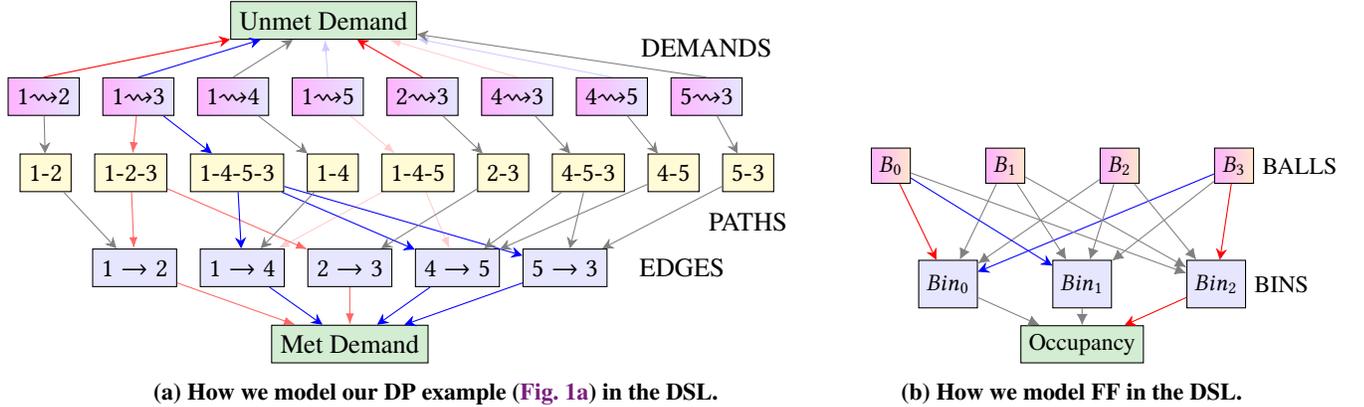
\begin{figure*}
    % \footnotesize
    \begin{subfigure}[b]{0.64\textwidth}
    \begin{tikzpicture}[node distance=0.15cm and 0.3cm, 
        met_demand/.style={draw, fill=mymagenta!10, rectangle, minimum size=0.5cm, align=center},
        demand/.style={draw, shading = axis, left color=mymagenta!30, shading angle=45, right color=blue!10, anchor=north, rectangle, minimum size=0.5cm, align=center},
        paths/.style={draw, fill=yellow!20, rectangle, minimum size=0.5cm, align=center},
        edges/.style={draw, fill=blue!10, rectangle, minimum size=0.5cm, align=center},
        arrow/.style={-{Stealth[length=1.5mm, width=1.5mm]}, draw=gray}
        ]

            % Define arrow styles with different colors  
        \tikzset{
            % gray
            arrowstyle_gray/.style={-{Latex[length=1.5mm]}, draw=gray},
            % sharp red
            arrowstyle_red/.style={-{Stealth[length=1.5mm, width=1.5mm]}, draw=red},
            % muted red
            arrowstyle_red_muted/.style={-{Latex[length=1.5mm]}, draw=red, opacity=0.6},
            % muted red
            arrowstyle_red_muted2/.style={-{Latex[length=1.5mm]}, draw=red, opacity=0.2},
            %sharp blue
            arrowstyle_blue/.style={-{Stealth[length=1.5mm, width=1.5mm]}, draw=blue},
            % muted blue
            arrowstyle_blue_muted/.style={-{Latex[length=1.5mm]}, draw=blue, opacity=0.6},
            arrowstyle_blue_muted2/.style={-{Latex[length=1.5mm]}, draw=blue, opacity=0.2}
        }
        % Nodes
        \node[met_demand, fill=forestgreen!20] (unmet) at (0,0) {Unmet Demand};
        \node[demand, below left=0.5cm and 2cm of unmet] (1x2) {$1$$\rightsquigarrow$$2$};
        \node[demand, right=of 1x2] (1x3) {$1$$\rightsquigarrow$$3$};
        \node[demand, right=of 1x3] (1x4) {$1$$\rightsquigarrow$$4$};
        \node[demand, right=of 1x4] (1x5) {$1$$\rightsquigarrow$$5$};
        \node[demand, right=of 1x5] (2x3) {$2$$\rightsquigarrow$$3$};
        \node[demand, right=of 2x3] (4x3) {$4$$\rightsquigarrow$$3$};
        \node[demand, right=of 4x3] (4x5) {$4$$\rightsquigarrow$$5$};
        \node[demand, right=of 4x5] (5x3) {$5$$\rightsquigarrow$$3$};
        \node[above=of 5x3] (demands) {DEMANDS};

        \node[paths, below right=0.5cm and -0.8cm of 1x2] (1-2) {$1$-$2$};
        \node[paths, right=of 1-2] (1-2-3) {$1$-$2$-$3$};
        \node[paths, right=of 1-2-3] (1-4-5-3) {$1$-$4$-$5$-$3$};
        \node[paths, right=of 1-4-5-3] (1-4) {$1$-$4$};
        \node[paths, right=of 1-4] (1-4-5) {$1$-$4$-$5$};
        \node[paths, right=of 1-4-5] (2-3) {$2$-$3$};
        \node[paths, right=of 2-3] (4-5-3) {$4$-$5$-$3$};
        \node[paths, right=of 4-5-3] (4-5) {$4$-$5$};
        \node[paths, right=of 4-5] (5-3) {$5$-$3$};

        \node[below=of 5-3] (paths) {PATHS};

        % Edges
        \node[edges, below right=.75cm and -1cm of 1-2-3] (e1x2) {$1\rightarrow2$};
        \node[edges, right=of e1x2] (e1x4) {$1\rightarrow4$};
        \node[edges, right=of e1x4] (e2x3) {$2\rightarrow3$};
        \node[edges, right=of e2x3] (e4x5) {$4\rightarrow5$};
        \node[edges, right=of e4x5] (e5x3) {$5\rightarrow3$};

        \node[right=of e5x3] (edges) {EDGES};

        \node[met_demand, fill=forestgreen!20, below=.5cm of e2x3] (met)  {Met Demand};

        % Arrows
        \draw[arrowstyle_red] (1x2.north) -- (unmet);
        \draw[arrowstyle_blue] (1x3.north) -- (unmet);
        \draw[arrow] (1x4.north) -- (unmet);
        \draw[arrowstyle_blue_muted2] (1x5.north) -- (unmet);
        \draw[arrowstyle_red] (2x3.north) -- (unmet);
        \draw[arrowstyle_red_muted2] (4x3.north) -- (unmet);
        \draw[arrowstyle_blue_muted2] (4x5.north) -- (unmet);
        \draw[arrow] (5x3.north) -- (unmet);

        \draw[arrow] (1x2) -- (1-2);
        \draw[arrowstyle_red_muted] (1x3) -- (1-2-3);
        \draw[arrowstyle_blue] (1x3) -- (1-4-5-3);
        \draw[arrow] (1x4) -- (1-4);
        \draw[arrowstyle_red_muted2] (1x5) -- (1-4-5);
        \draw[arrow] (2x3) -- (2-3);
        \draw[arrow] (4x3) -- (4-5-3);
        \draw[arrow] (4x5) -- (4-5);
        \draw[arrow] (5x3) -- (5-3);

        \draw[arrow] (1-2) -- (e1x2);
        \draw[arrowstyle_red_muted] (1-2-3) -- (e1x2);
        \draw[arrowstyle_red_muted] (1-2-3) -- (e2x3);
        \draw[arrowstyle_blue] (1-4-5-3) -- (e1x4);
        \draw[arrowstyle_blue] (1-4-5-3) -- (e4x5);
        \draw[arrowstyle_blue] (1-4-5-3) -- (e5x3);
        \draw[arrow] (1-4) -- (e1x4);
        \draw[arrowstyle_red_muted2] (1-4-5) -- (e1x4);
        \draw[arrowstyle_red_muted2] (1-4-5) -- (e4x5);
        \draw[arrow] (2-3) -- (e2x3);
        \draw[arrow] (4-5-3) -- (e4x5);
        \draw[arrow] (4-5-3) -- (e5x3);
        \draw[arrow] (4-5) -- (e4x5);
        \draw[arrow] (5-3) -- (e5x3);

        \draw[arrowstyle_red_muted] (e1x2) -- (met);
        \draw[arrowstyle_blue] (e1x4) -- (met);
        \draw[arrowstyle_red_muted] (e2x3) -- (met);
        \draw[arrowstyle_blue] (e4x5) -- (met);
        \draw[arrowstyle_blue] (e5x3) -- (met);

    \end{tikzpicture}
    \caption{How we model our DP example (\autoref{f:pinning_issues}) in the DSL.}
    \label{fig:dsl_DP}
    \end{subfigure}
    \begin{subfigure}[b]{0.3\textwidth}
        \begin{tikzpicture}[node distance=0.25cm and 1.1cm, scale=0.9, every node/.style={transform shape}, auto]  
        % Define styles for I nodes with black color  
        \tikzset{
            Istyle_source/.style={draw, shading = axis, left color=mymagenta!30, shading angle=45, right color=orange!20, anchor=north, rectangle, minimum size=0.5cm, align=center},
        	Istyle/.style={rectangle, draw=black, fill=mymagenta!30, text centered, minimum height=2em},
        	arrowstyle/.style={-{Latex[length=1.5mm]}, draw=black}  
        }  
        
        % Nodes  
        \node[Istyle_source] (B0) {$B_0$};  
        \node[Istyle_source, right=of B0] (B1) {$B_1$};  
        \node[Istyle_source, right=of B1] (B2) {$B_2$};  
        \node[Istyle_source, right=of B2] (B3) {$B_3$}; 
        \node[node distance=0.2cm and 0.0cm, right=of B3] (BALLS) {BALLS};
        
        % Define style for B and Occupancy nodes  
        \tikzstyle{box} = [circle, draw, text centered, minimum height=2em]  
        \tikzstyle{occ} = [rectangle, draw, text centered, minimum height=1.5em]  
        
        % Positioning B nodes according to the problem's description  
        \node[Istyle, fill=blue!10] (Bin0) at ($(B0)!0.5!(B1)-(0, 1.75cm)$) {$Bin_0$};   
        \node[Istyle, right=of Bin0, fill=blue!10] (Bin1) {$Bin_1$};  
        \node[Istyle, right=of Bin1, fill=blue!10] (Bin2) {$Bin_2$};
        \node[node distance=0.2cm and 0.0cm, right=of Bin2] (BINS) {BINS};
        \node[occ, fill=forestgreen!20, below=of Bin1] (Occ) {Occupancy};  
        
        % Define arrow styles with different colors  
        \tikzset{
            % gray
            arrowstyle_gray/.style={-{Latex[length=1.5mm, width=1.5mm]}, draw=gray},
            % sharp red
            arrowstyle_red/.style={-{Stealth[length=1.5mm, width=1.5mm]}, draw=red},
            % muted red
            arrowstyle3/.style={-{Latex[length=1.5mm]}, draw=red, opacity=0.5},
            %sharp blue
            arrowstyle_blue/.style={-{Stealth[length=1.5mm, width=1.5mm]}, draw=blue},
            % muted blue
            arrowstyle5/.style={-{Latex[length=1.5mm]}, draw=blue, opacity=0.5}
        }
        
        % Arrows from I nodes to B nodes with different colors  
        \foreach \from/\to/\style in {B0/Bin0/arrowstyle_red, B1/Bin0/arrowstyle_gray, B2/Bin0/arrowstyle_gray, B3/Bin0/arrowstyle_blue,  
            B0/Bin1/arrowstyle_blue, B1/Bin1/arrowstyle_gray, B2/Bin1/arrowstyle_gray,
            B3/Bin1/arrowstyle_gray,  
            B0/Bin2/arrowstyle_gray, B1/Bin2/arrowstyle_gray, B2/Bin2/arrowstyle_gray, B3/Bin2/arrowstyle_red}  
        {  
            \draw[\style] (\from) -- (\to);  
        }
        % Arrows from B nodes to Occupancy  
        \foreach \from/\style in {Bin0, Bin1}  
        {  
        	\draw[arrowstyle_gray] (\from) -- (Occ);  
        }
        \draw[arrowstyle_red] (Bin2) -- (Occ);
    \end{tikzpicture}  
    \caption{How we model FF in the DSL.}
    \label{fig:dsl_vbp}
    \end{subfigure}
    \caption{Encoding heuristics in our DSL. We show sink nodes in \sink; source nodes enforcing behavior of split nodes in~\sourcesplit  \ and source nodes enforcing behavior of pick nodes in~\sourcepick; copy nodes in \regularCopy; and split nodes with limited outgoing capacity in \splitnode. The edge colors show type 2 explanations: more intense red (blue) edges show there are more samples in the subspace that only the heuristic (optimal) uses. In (a), DP uses the shortest path for the demand between $1$$\rightsquigarrow$$3$ and the optimal does not. In (b), we see FF places a large ball ($B_0$) in the first bin, causing it to have to place the last ball differently, too. We used 3000 samples for each explanation. \sysname took 20 minutes to produce each figure.}
    
    %\added[id=PK]{These explanations for DP and FF, using 3000 samples each, were completed in under 20 minutes per case. Double check and say the copy nodes in the TE.}}
    % \caption{How we encode heuristics in our DSL. We show sink nodes in \sink; source nodes in \source~(in DP these also enforce the behavior of split nodes and in FF that of pick nodes); split nodes in \regularCopy; and split nodes with limited outgoing capacity in \splitnode. The edge colors show type 2 explanations: more intense red (blue) edges show there are more samples in the subspace where only the heuristic (optimal) use it. In (a) DP uses the shortest path for the demand between $1$$\rightsquigarrow$$3$ and the optimal does not for the majority of samples in the subspace. In (b) we see the arrival order causes FF to place a large ball ($B_0$) in the first bin which then cause it to have to place the last ball differently too.}
    
    %We see the arrival order of the balls causes FF to place the first ball in the first bin which later prevents it to use that bin for ball 3; the }

\end{figure*}

We propose $\mathcal{X}$Plain (\autoref{fig::proposal}). 
Users describe the heuristic and benchmark through its \textbf{domain specific language} (\autoref{sec::dsl}). 
The main purpose of this domain-specific language (DSL) is to \emph{concretely define} the behavior of the heuristic and benchmark, which allows automated systems to analyze, compare, and explain their behavior.
The \textbf{compiler} translates the DSL into low-level optimization constructs.

The \textbf{adversarial subspace generator}(\autoref{sec::sub-generator}) generates a set of contiguous subspaces where the inputs in each subspace cause the heuristic to underperform and the \textbf{significance checker} filters the outputs and ensures the subspaces are statistically significant~---~it checks that the inputs that fall into these subspaces produce higher gaps compared to those that do not with statistical significance.

The \textbf{explainer}~(\autoref{sec::explainer}) describes how the heuristic's actions differ from the benchmark in each contiguous subspace for a given problem instance. The \textbf{generalizer}~(\autoref{sec:statistics}) extrapolates from these instance-based observations to produce the properties of the inputs and the instance that cause the heuristic to underperform. 
It uses instance-based explanations across many instances to do so~---~we use the \textbf{instance generator} to create such instances.

\subsection{The domain-specific language}
\label{sec::dsl}

To auto-generate the information we described in~\autoref{sec::case} we need a DSL to concretely encode the heuristic and benchmark algorithms. 
We need a DSL that: (1) can represent diverse heuristics; (2) we can use to automatically compile into optimizations that we can efficiently solve (those that existing solvers support and that do not introduce too many additional constraints and variables compared to hand-written models); and (3) is easy and intuitive to use. %\ryan{no need to tie ourselves to Gurobi. We could just say optimization solvers.}

%The DSL design requires careful thought: we need to ensure we preserve our ability to translate the problem description into an optimization problem we can solve and at the same time ensure users can model any heuristic they normally would be able to model in the heuristic analyzer.

We design an abstraction based on \emph{network flow problems}~\cite{bertsimas}. Network flow problems are optimizations that, given a set of sources and destinations, optimize how to route traffic to respect capacity constraints, maximize link utilization, etc. 
Network flow problems impose two key constraints: the total flow on each link should be below the link capacity, and what comes into a node should go out (flow conservation).

There are advantages to using network flow problems: they have an intuitive graph representation~\cite{bertsimas}~---~operators know how to reason about the flow of traffic through such graphs; we can easily translate them into convex optimization or feasibility problems~\cite{bertsimas}; and they have many variants which we can use and build upon.

We can use the network flow model and extend it through a set of new ``node behaviors'' to ensure we can apply it to a broad class of heuristics. Node behaviors are a set of constraints that operate on the flows coming in and going out of each node: ``split nodes'' (enforce flow conservation constraints); ``pick nodes'' (enforce flow conservation constraints but only allow flow on a single outgoing edge); ``copy nodes'' (copy the flow that comes in onto all of their outgoing edges); ``source'' and ``sink'' nodes (produce or consume traffic); etc. 
A node can enforce multiple behaviors simultaneously. 
We include node behaviors that do not enforce flow conservation constraints (such as the ``copy nodes") or capacity constraints by default so that we can model a broad set of heuristics. 
Users can also add metadata to each node or edge, which we can use later to improve the explanations we produce.

Users encode the problem, the heuristic, and the benchmark in the DSL in abstract terms. For example, to model VBP they specify that the problem operates over (abstract) sequences of different node types that correspond to the balls and bins in the VBP problem. Users also encode the actions the heuristic and the optimal can make in terms of the relationship between the different sequences of nodes and the edges that connect them and rules that govern how flow can traverse from one node to the next. To analyze a specific instance of the VBP problem, users input the number of balls and bins and then \sysname concretizes the encoding (we show a concretized example with 4 one-dimensional balls and 3 bins in~\autoref{fig:dsl_vbp}).

Our DSL allows us to model the examples from prior work. 
We can model DP with split, source, and sink nodes (\autoref{fig:dsl_DP}), and we use ``pick nodes'' with limited capacity that only allow a ball to be assigned to a \emph{single} bin (\autoref{fig:dsl_vbp}) to model FF. 

% We have proven that we can represent \emph{any} linear or mixed integer problem through a small set of node behaviors (our abstraction is sufficient).

\ifshowproof
We prove that we can represent \emph{any} linear or mixed integer problem through a small set of node behaviors (our abstraction is sufficient) in~\autoref{sec:proof}.
\else
We have proven that we can represent \emph{any} linear or mixed integer problem through a small set of node behaviors (our abstraction is sufficient)~\cite{extended_xplain}. We omit this result and how we model other heuristics from prior work (those in~\cite{metaopt, venkatarxive}) due to space constraints. We provide the proof in the extended version on Arxiv~\cite{extended_xplain}.

\fi

We can easily compile node behaviors into efficient optimizations. 
Our encoding allows us to solve the optimization faster compared to the hand-coded optimization:  our DSL allows us to find redundant constraints and variables.
This, in turn, reduces the number of variables and constraints MetaOpt adds in its re-writes\footnote[1]{Gurobi's pre-solve can also do this, but it changes the variable names, making it hard to connect them back to the original problem.}. 
We have implemented a complete DSL in a LINQ~\cite{linq}-style language: compared to the original MetaOpt implementation, the compiled DSL analyzes our DP example 4.3$\times$ faster. 
MetaOpt does not re-write FF, and we do not provide any run-time gains in that case.

%\footnote{We can also use Gurobi's pre-solve in this step to further optimize the compiled optimization but this function changes the variable which makes it hard to connect them back to the original problem.}.

% \behnaz{@Pantea, this is the graph we were talking about, run the VBP and DP examples we have here both through the code from Pooria and through solal's code and show how quickly they finish running and what is the gap that they produce.}

\noindent \textbf{Open questions.} 
We can describe any heuristic that MetaOpt can analyze in our DSL. 
To support other analyzers (e.g.,~\cite{venkatarxive}) we may need to change our compiler and add other node behaviors.
We also need to understand what metadata the user can (or should) provide to enable $\mathcal{X}$Plain. 
This may require a co-design with $\mathcal{X}$Plain's other components. 

% \pk{cite proof}
Although we have proved that any mixed integer program can be mapped to our DSL (\ifshowproof \autoref{sec:proof}\else \cite{extended_xplain}\fi), that does not mean such a mapping is the most efficient representation of the heuristic in the DSL: we may achieve better performance if we model the heuristic directly in the DSL. We need further research to formalize and guide users in how to do so and optimize their representations.

\subsection{The adversarial subspace generator}
\label{sec::sub-generator}

Random search cannot find adversarial subspaces (it may not even find an adversarial point~\cite{metaopt}). We propose an algorithm where we extrapolate from the heuristic analyzer's output and: (1) use the analyzer to find an adversarial example; (2) find the adversarial subspace around that example; (3) exclude that subspace and repeat until we can no longer find an adversarial example (where the heuristic significantly underperforms) outside all of the subspaces we have found so far. 

To find each adversarial subspace, we first find a rough candidate region: we sample in a cubic area around the initial adversarial point given by a heuristic analyzer and expand our sampling area based on the density of adversarial (bad) samples we find in each direction. We define these ``directions'' based on where the sub-cube (slice) lies with respect to the initial adversarial point that MetaOpt found.
%\added[id=PK]{Directions are defined as subcubes (slice) with a diagonal between the initial adversarial point and the vertices of the original cube.}
We stop when the density of bad samples drops in all possible expansion directions (\autoref{fig:cubes}).

We go ``slice by slice" when we investigate the cubic region around the initial bad sample because the adversarial subspace may not be uniformly spread around the initial point. 
We extend our sampling regions only around the slices where the density of bad samples is high. 
We pick the number of samples we use based on the DKW inequality~\cite{massart1990tight}. %DKW shows to have a confidence level of $1 -\alpha$ that the probability of maximum deviation of the empirical distribution function given by $n$ samples from the true value of function is no more than $\epsilon$, we should have $2 e^{-2n\epsilon} \leq \alpha$.
%.\pk{@Behnaz, will you please edit?}

These subspace boundaries we have so far are not exact: how big we pick our slices and how much we expand them in each iteration influence how many false positives fall into the subspace. 
We refine the subspace based on an idea from prior work in diagnosis~\cite{chen2004failure}. 
We train a regression tree that predicts the performance gap on samples in our rough subspace. 
The predicates that form the path that starts at the root of this tree and reaches the leaf that contains the initial bad sample more accurately describe the subspace (\autoref{fig:decision_tree_vpb}).

\newcommand{\red}{{\protect\tikz[baseline=-0.5ex]\fill[red] (0,-0.1em) -- (0.25em,0.4em) -- (0.5em,-0.1em) -- cycle;}}
\newcommand{\blue}{{\protect\tikz[baseline=-0.5ex]\fill[blue] (0,0) circle (0.25em);}}

%()
%Behnaz: for the paper we submit later on this topic, note that we should compare against a case where we don't do sliver by sliver.
\begin{figure*}
% \vspace{-5 pt}
   \begin{subfigure}[b]{0.29\textwidth}
        % \centering
        \begin{tikzpicture}
            \begin{axis}[
                title={},
                xlabel={$x$},
                ylabel={$y$},
                grid=both,
                width=\textwidth,
                height=\textwidth, % Adjusted height to match the aspect ratio
                axis line style={draw=none}, % Remove outside border
                xticklabels={}, % Keep the grid but remove x-axis tick labels
                yticklabels={}, % Keep the grid but remove y-axis tick labels
                ylabel=\empty,
                xlabel=\empty
            ]
            % Plotting a 1x1 square
            \addplot [
                domain=0:1,
                samples=2,
                thick,
            ] coordinates {(-1,-1) (1,-1) (1,1) (-1,1) (-1,-1)};
            \addplot [
                domain=0:1,
                samples=2,
                fill=green,
                fill opacity=0.2 % Adjust opacity for light green
            ] coordinates {(0,0) (1,0) (1,-1) (0,-1) (0,0)};
            \addplot [
                domain=0:1,
                samples=2,
                fill=green,
                fill opacity=0.2 % Adjust opacity for light green
            ] coordinates {(0,0) (1.5,0) (1.5,1.5) (0,1.5) (0,0)};
            % Plotting a 1x1 square
            \addplot [
                domain=0:1,
                samples=2,
                thick
            ] coordinates {(0,-1.5) (1.5,-1.5) (1.5,1.5) (0,1.5) (0,-1.5)};
            % Plotting a 1x1 square
            \addplot [
                domain=0:1,
                samples=2,
                thick
            ] coordinates {(1.5,0) (2, 0) (2,2) (0,2) (0, 1.5)};
            % Plotting a 1x1 square
            \addplot [
                domain=0:1,
                samples=2,
                thick,
                dashed
            ] coordinates {(0,-1.5) (0,1.5)};
            \addplot [
                domain=0:1,
                samples=2,
                thick,
                dashed
            ] coordinates {(-1,0) (1.5,0)};
            % Example of good point
            \addplot+[
                only marks,
                mark=*,
                mark options={scale=0.5, fill=blue, draw=blue}
            ] coordinates {(0.870,  0.019)
                           (0.771,  0.761)
                           (0.461, -0.406)
                           (0.771, -0.959)
                           (0.295, -0.290)
                           (0.570, -0.858)};
            \addplot+[
                only marks,
                mark=*,
                mark options={scale=0.5, fill=blue, draw=blue}
            ] coordinates {(1.123, -0.010)
                           (1.172, -0.503)
                           (1.398, -0.802)
                           (1.480, -0.292)
                           (1.343, -0.053)
                           (0.156, -1.356)
                           (1.302, -1.283)
                           (0.335, -1.153)
                           (0.663, -1.338)
                           (1.395, -0.318)
                           (1.075, -0.501)
                           (1.223, -0.474)
                           (1.054, -0.868)
                           (0.925, -1.152)
                           (1.462, -1.089)
                           (0.457, -1.164)
                           (0.440, -1.323)
                           (1.050, -1.166)};
            
            \addplot+[
                only marks,
                mark=*,
                mark options={scale=0.5, fill=blue, draw=blue}
            ] coordinates {(-0.176, -0.555)
                           (-0.226,  0.800)
                           (-0.662, -0.530)
                           (-0.914, -0.385)
                           (-0.558, -0.424)
                           (-0.916,  0.799)
                           (-0.771,  0.448)
                           (-0.069, -0.432)
                           (-0.224,  0.012)
                           (-0.381, -0.255)
                           (-0.123,  0.997)
                           (-0.303,  0.575)
                           (-0.085, -0.206)
                           (-0.960, -0.298)
                           (-0.895,  0.636)
                           (-0.090, -0.288)
                           (-0.812,  0.165)
                           (-0.313,  0.318)
                           (-0.395, -0.244)
                           (-0.693,  0.800)
                           (-0.768,  0.423)
                           (-0.247, -0.884)
                           (-0.224, -0.808)
                           (-0.422,  0.919)
                           (-0.351,  0.898)
                           (-0.625,  0.427)
                           (-0.221, -0.070)
                           (-0.921, -0.717)
                           (-0.091,  0.805)
                           (-0.023, -0.631)
                           (-0.905, -0.478)
                           (-0.617, -0.854)
                           (-0.400,  0.071)
                           (-0.056,  0.994)
                           (-0.329, -0.975)
                           (-0.484,  0.420)
                           (-0.336,  0.794)
                           (-0.696, -0.180)
                           (-0.405,  0.269)
                           (-0.334,  0.917)
                           (-0.572, -0.563)
                           (-0.332, -0.370)
                           (-0.978, -0.525)
                           (-0.253, -0.762)
                           (-0.684, -0.575)
                           (-0.199, -0.387)
                           (-0.369,  0.109)
                           (-0.015, -0.078)
                           (-0.015,  0.036)
                           (-0.053, -0.067)
                           (-0.102, -0.265)
                           (-0.205,  0.355)
                           (-0.571, -0.066)
                           (-0.394,  0.042)
                           (-0.450,  0.022)
                           (-0.827,  0.323)
                           (-0.321, -0.633)
                           (-0.643, -0.923)
                           (-0.298,  0.534)
                           (-0.427,  0.728)
                           (-0.884,  0.081)
                           (-0.941,  0.184)
                           (-0.900, -0.871)
                           (-0.111, -0.215)
                           (-0.012,  0.509)
                           (-0.870, -0.383)
                           (-0.963, -0.916)
                           (-0.536, -0.289)
                           (0.1, 1.3)
                           (-0.944, -0.256)
                           (-0.431,  0.868)};
                \addplot+[
                    only marks,
                    mark=*,
                    mark options={scale=0.5, fill=blue, draw=blue}
                ] coordinates {(1.920, 1.743)
                               (1.804, 0.732)
                               (0.031, 1.507)
                               (1.785, 0.013)
                               (0.327, 1.609)
                               (1.261, 1.745)
                               (1.761, 1.840)
                               (0.676, 1.652)
                               (1.566, 1.247)
                               (0.693, 1.712)
                               (1.691, 0.750)
                               (1.664, 0.458)
                               (1.616, 0.981)
                               (1.605, 0.199)
                               (0.939, 1.918)
                               (0.557, 1.791)
                               (1.804, 0.844)
                               (1.995, 1.014)
                               (0.765, 1.729)
                               (0.402, 1.952)
                               (0.647, 1.552)
                               (0.257, 1.901)
                               (0.049, 1.622)
                               (0.946, 1.670)
                               (1.820, 0.393)
                               (0.813, 1.675)
                               (1.657, 0.663)
                               (1.881, 1.219)
                               (1.799, 0.087)
                               (0.828, 1.926)
                               (1.283, 1.520)
                               (1.544, 1.225)
                               (1.982, 0.955)
                               (1.082, 1.905)
                               (1.237, 1.837)
                               (1.561, 1.832)
                               (1.836, 0.908)
                               (1.534, 0.914)
                               (1.715, 0.994)
                               (1.817, 1.849)};
                
            % example of bad point
            \addplot+[
                only marks,
                mark=triangle*,
                mark options={scale=0.9, fill=red, draw=red}
            ] coordinates {(0, 0)};
            \addplot+[
                only marks,
                mark=triangle*,
                mark options={scale=0.9, fill=red, draw=red}
            ] coordinates {(-0.809, -0.739)
                           (-0.775, -0.458)
                           (-0.639,  0.507)
                           (-0.439,  0.407)
                           (-0.640, -0.687)
                           (-0.235, -0.372)
                           (1.156, -0.356)
                           (0.480, -1.429)
                           (1.75, 1.65)
                           };
            \addplot+[
                only marks,
                mark=triangle*,
                mark options={scale=0.9, fill=red, draw=red}
            ] coordinates {(1.006, 0.637)
                           (1.017, 0.739)
                           (1.060, 0.503)
                           (1.363, 0.890)
                           (1.368, 0.832)
                           (1.462, 0.040)
                           (1.447, 0.229)
                           (1.083, 0.125)
                           (1.325, 0.312)
                           (1.433, 1.411)
                           (0.513, 1.084)
                           (1.24, 1.303)
                           (0.45, 1.247)
                           (0.842, 1.086)
                           (0.603, 1.090)
                           (0.555, 1.347)
                           (0.163, 1.018)
                           (0.640, 1.490)};

            \addplot+[
                only marks,
                mark=triangle*,
                mark options={scale=0.9, fill=red, draw=red}
            ] coordinates {(0.194,  0.763)
                           (0.947, -0.114)
                           (0.744, -0.940)
                           (0.497,  0.399)
                           (0.243, -0.194)
                           (0.914,  0.896)
                           (0.033, -0.716)
                           (0.765, -0.930)
                           (0.688, -0.990)
                           (0.284, -0.631)
                           (0.125,  0.283)
                           (0.973, -0.911)
                           (0.707, -0.335)
                           (0.196,  0.742)
                           (0.695,  0.407)
                           (0.576, -0.997)
                           (0.083, -0.316)
                           (0.749,  0.556)
                           (0.269, -0.225)
                           (0.980, -0.600)
                           (0.859, -0.838)
                           (0.443,  0.826)
                           (0.367,  0.477)
                           (0.766, -0.524)
                           (0.932,  0.395)
                           (0.110, -0.186)
                           (0.066,  0.948)
                           (0.546, -0.966)
                           (0.254,  0.417)
                           (0.543,  0.386)
                           (0.498,  0.977)
                           (0.826,  0.973)
                           (0.653,  0.428)
                           (0.828,  0.015)
                           (0.058, -0.042)
                           (0.846,  0.943)
                           (0.354, -0.353)
                           (0.829,  0.790)
                           (0.184, -0.396)
                           (0.343,  0.115)
                           (0.304,  0.437)
                           (0.366, -0.355)
                           (0.297,  0.531)
                           (0.120,  0.053)
                           (0.400, -0.330)
                           (0.250,  0.723)
                           (0.891, -0.897)
                           (0.222, -0.530)
                           (0.846,  0.881)
                           (0.767, -0.116)
                           (0.601, -0.433)
                           (0.763,  0.227)
                           (0.916,  0.536)
                           (0.838, -0.972)
                           (0.959,  0.328)
                           (0.896, -0.009)
                           (0.809, -0.483)
                           (0.561, -0.943)
                           (0.287,  0.421)
                           (0.756, -0.884)
                           (0.324, -0.984)
                           (0.268, -0.583)
                           (0.756, -0.387)
                           (0.127, -0.916)
                           (0.183, -0.537)
                           (0.790,  0.295)
                           (0.885,  0.443)
                           (0.162, -0.079)
                           (0.538, -0.571)
                           (0.457, -0.072)};

            \end{axis}
        \end{tikzpicture}
        \caption{Identifying dense adversarial slices.}
        \label{fig:cubes}
    \end{subfigure}
    \begin{subfigure}[b]{0.27\textwidth}
    \centering
        \hspace*{-1em} % Adjust this value to move the subfigure further left
        \begin{tikzpicture}[node distance=0.4cm and 1.4cm, scale=0.8, every node/.style={transform shape}, auto]  
        % Define styles for I nodes with black color  
        \tikzset{  
        	Istyle/.style={rectangle, draw=black, text centered, minimum height=2em},  
        	arrowstyle/.style={-{Latex[length=1.5mm]}, draw=black},
                % gray
                arrowstyle1/.style={-{Latex[length=1.5mm]}, draw=gray!70}
        }  
        
        % Nodes  
        \node[Istyle, fill=orange!25] (Root) {$\sum_{n=0}^{3}B_n \leq 1.5$};  
        
        % Positioning B nodes according to the problem's description  
        \node[Istyle, fill=orange!5] (V0) at ($(Root)-(1.2cm, 1.2cm)$) {$Gap = 1\%$};   
        \node[Istyle, fill=orange!50] (V1) at ($(Root)-(-1.2cm, 1.2cm)$) {$B_1  <= 0.5$};
        \node[Istyle, fill=orange!100] (V2) at ($(V1)-(1.2cm, 1.2cm)$) {$Gap = 25\%$};
        \node[Istyle, fill=orange!5] (V3) at ($(V1)-(-1.2cm, 1.2cm)$) {$Gap = 3\%$};

        % % Arrows from I nodes to B nodes with different colors  
        \foreach \from/\to/\style in {Root/V0/arrowstyle1, Root/V1/arrowstyle, V1/V2/arrowstyle, V1/V3/arrowstyle1}  
        {  
            \draw[\style] (\from) -- (\to);  
        }

    \end{tikzpicture}  
    \caption{\small Refinement by regression tree.}
    \label{fig:decision_tree_vpb}
    \end{subfigure}
\begin{subfigure}{0.33\textwidth}
    % \hspace*{-2em} % Adjust this value to move the subfigure further left
    \footnotesize
    \begin{flushleft}
    Adversarial subspaces: $\cup_i D_i$
    % \begin{flalign*}
    % & \texttt{Adversarial subspaces:} \cup_i D_i &
    % \end{flalign*}
     \begin{flalign*}
    & D_i = \cup_j \left\{ \textbf{X} \in \mathbb{R^+}^4 \ \Big| \ \left[
        \begin{array}{c}
        \textbf{A} \\
        \textbf{T}_i
        \end{array}\right] \textbf{X} \leq \left[
        \begin{array}{c}
        \textbf{C}_{j}^{i} \\
        \textbf{V}_i
        \end{array}\right] \right\} &
        \end{flalign*}
    % \[
    \begin{flalign*}
    & \textbf{A} = \left[
    \begin{array}{c}
    \textbf{I}_{4 \times 4} \\
    - \textbf{I}_{4 \times 4} \\
    \end{array}\right], \quad
    \textbf{X} = \left[ B_0 \quad B_1 \quad B_2 \quad B_3 \right]^T & %
    \end{flalign*}
    % \]
% \[
   
    % \]
    % \hline
    % \midrule
    
    \begin{flalign*}
    & D_0: \textbf{C}_{0}^{0} = \left[
    0.01 \; 0.51 \; 0.51 \; 0.51 \; 0 \; -0.49 \; -0.49 \; -0.49
    \right]^T & 
    \end{flalign*}
\\
    \[
    \textbf{T}_0 = \left[
    \begin{array}{cccc}
    -1 & -1 & -1 & -1 \\
    0 & 1 & 0 & 0
    \end{array}\right],
    \quad
    \textbf{V}_0 = \left[-1.5 \quad 0.5\right]^T
    \]
    \caption{The adversarial subspaces for FF. }
    \label{fig:mathematical_vbp}
    \end{flushleft}
\end{subfigure}

    \caption{ The adversarial subspace generator: (a) finds a rough subspace and separates bad samples (\red) from good ones (\blue); (b) it trains a regression tree on these samples and uses it to refine the subspace and produces (c). We show the first subspace ($D_0$) for our FF example in (c). Here, $C^i_j$ encodes the rough subspace and $T_i$ and $V_i$ the path in the regression tree.}
% \vspace{-5 pt}
    %  (\tikz[baseline=-0.5ex]\fill[red] (0,-0.1em) -- (0.25em,0.4em) -- (0.5em,-0.1em) -- cycle;) from good ones (\tikz[baseline=-0.5ex]\fill[blue] (0,0) circle (0.25em);)
\end{figure*}
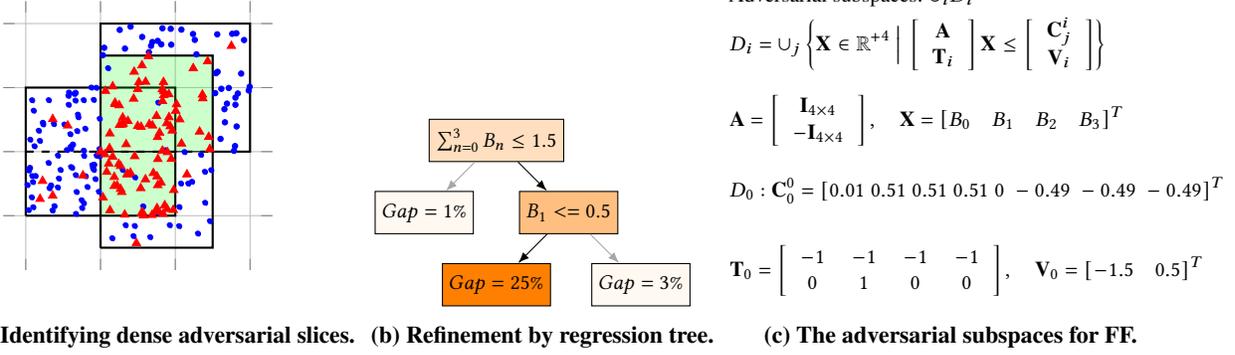

%\behnaz{@Siva, @Pantea: we need a figure here. The figure should have two parts: the first shows the pizza slice model and the other two sub-figures show the decision tree for both VBP and demand pinning. My suggestion, just show the path of interest in each tree in a readable way and make the rest of the tree fuzzy in both cases.} \ryan{want to second that we should add a nice visual of how the search works.}

\noindent \textbf{The significance checker \label{sec:statistics}} ensures the subspaces we find are statistically significant: the points in a subspace cause a higher performance gap compared to those immediately outside it. We only report those subspaces with a low-p-value (less than $0.05$) as adversarial.

We use the  Wilcoxon signed-rank test ~\cite{wilcoxon1945individual}, which allows for dependant samples~---~the subspace fully describes what points are inside and what points are not (the samples in the two pools are dependent). We find subspaces for DP and VBP with p-values $2 \times 10^{-60}$ and $8 \times 10^{-11}$, respectively.
%We need a statistical test that allows for dependent samples because the subspace fully describes what points are inside and what points are not (the samples in the two pools are dependent). 
%We find Wilcoxon signed-rank test ~\cite{wilcoxon1945individual} satisfies this requirement. The highest p-values for the subspaces we find for DP and VBP are $2 \times 10^{-60}$ and $8 \times 10^{-11}$, respectively.
%\textcolor{red}{SS: This paragraph reads a bit weird. I would say we use Wilcoxon test because ... (the dependent stuff). Right now, it reads a bit like ``We need to treat dependent stuff and we found that this test does that". Of course, that's how it happened, but it sounds a bit too informal. Also, the last sentence about the p-values is a bit strange ... do we intend to say that all the subspaces found passed the Wilcoxon test with a given significance level?}%The p-value in our earlier example subspaces (\autoref{sec::sub-generator}) were {\color{red} XXX} for DP and XX for VBP.
% \behnaz{@Pantea please fill this in.}

Our approach allows us to find all \emph{statistically significant} subspaces that meet our exploration granularity. If we do not include an adversarial input in a subspace (if it is outside of the region we explored), the analyzer will find it in the next iteration. Users can control \sysname's ability to find all adversarial scenarios: they can use smaller cube-sizes to explore the space in more detail but it comes at the cost of a slower runtime. They can also elect to include those parts of the initial subspaces \sysname finds (before we apply the decision tree) as part of MetaOpt's decision space (if they do so they need to include the number of times they are willing to re-examine an area to avoid an infinite cycle~---~there may be regions that are not statistically significant and \sysname would revisit them if they contain a input instance that produces a high gap).

\noindent\textbf{Open questions.} The decision tree helps us identify predicates (of the form $f \ge t$ where $f$ is a feature and $t$ a threshold) that describe a subspace. What features we train the tree on influence what predicates we can get. On small instances we can use raw inputs but on larger instances this would require a deep decision tree to fully describe the space~---~the output becomes computationally more difficult to use in the next step (step (3) above). We need to define functions $\mathcal{F}(\mathcal{I})$ of the input $\mathcal{I}$ that allow us to describe these subspaces efficiently and which we can use in the analyzers to execute step (3) (i.e., where we exclude a subspace and re-run the analyzer). 

It may be better if we apply the adversarial subspace generator (steps (1)-(3) above) directly to the ``projected'' input space: where each function $\mathcal{F}(\mathcal{I})$ describes one dimension of the $m$-dimensional space (note, $m$ need not be the same as the dimensions of the input space $\mathcal{I}$). If the space defined by the adversarial subspaces is sparse this approach may allow us to find these adversarial subspaces more efficiently.

We may need additional mechanisms to help scale \sysname~---~it may take a long time to find adversarial subspaces if we analyze a large problem instance or if there are many disjoint subspaces. 

%; which we can define predicates of the form $\mathcal{F}(\mathcal{I}) \ge T$ over; and which have a format that existing heuristic analyzers support (so that we can then exclude the subspace from the input space the heuristic analyzer investigates).

%The subspaces we find may still have false positives: a subspace may contain many samples where the heuristic does not under-perform. We run hypothesis tests to check the points in a subspace cause a higher performance gap compared to those immediately outside it with an acceptable $p$-value.

\subsection{The explainer}
\label{sec::explainer}
We hypothesize that the inputs in a contiguous subspace share the same root cause for why they cause the heuristic to underperform. 
This is where a network-flow-based DSL explicitly encoding the decisions of the heuristic and the benchmark algorithm proves useful. 
We run samples from within each contiguous subspace through the DSL and score edges based on if: (1) both the benchmark and the heuristic send flow on that edge (score = 0); (2) only the benchmark sends flow (score = 1); or (3) only the heuristic sends flow (score = -1).

Such a ``heatmap'' of the differences between the benchmark and the heuristic shows how inputs in the subspace interfere with the heuristic. In~\autoref{fig:dsl_DP}, in a given subspace with $3000$ samples, all pinnable demands share the \emph{same} shortest path (red arrows in 1-2-3 path), and the optimal routes them through alternative paths (blue arrows in 1-4-5-3 path).

\noindent \textbf{Open questions.} As the instance size (the scale of the problem we want to analyze) grows, the above heatmap may become harder to interpret. 
We need mechanisms that allow us to summarize the information in this heatmap in a way that the user can interpret and use to improve their heuristic. 

The heuristic and benchmark also differ in how much flow they route on each edge. We need to define the appropriate data structure to represent this information to a user so that they are interpretable and actionable.

\subsection{The generalizer and instance generator}
\label{sec:generalizer}

We can enable operators to improve their heuristics or know when to apply mitigations if we can extrapolate from the type 1 and 2 explanations to form type 3: what properties in the adversarial inputs cause the heuristic to underperform and what aspects of the problem instance exacerbate it? We need to find trends across instance-based information and find an instance-agnostic explanation for why the heuristic underperformed.

%If we can extrapolate from the instance-based information we have described so far and form an explanation that describes what properties in the adversarial inputs cause the heuristic to underperform and what aspects of the problem instance exacerbate the problem; we can enable operators to either improve the heuristic or apply mitigations.

 %We may observe DP underperforms more on an instance where the shortest path between pinned demands are longer or when there is less capacity along the links of these paths. 

To discover patterns, we need to consider a diverse set of instances and identify trends in the outputs of the subspace generator and the explainer. We build an \emph{instance generator} that uses the problem description in the DSL to create such instances and feeds them into the pipeline.

We imagine the generalizer would contain a ``grammar'' that uses the metadata the user provides through the DSL along with the network flow structure to describe trends in the instance-based explanations. 
For example, one may consider this predicate from a hypothetical grammar:

{\small
\begin{align*}
\texttt{increasing($\mathcal{P}$)}: \forall a, b \mid a,b \in \mathcal{P} ~ \& ~ |a| \ge |b| \rightarrow gap(a) \ge gap(b)
\end{align*}
}

With such a grammar, a generalizer can go through the observations on the samples the instance generator produced and check if the predicates in the grammar are statistically significant. For example, if $\mathcal{P}$ describes the set of shortest paths of pinnable demands in DP, the generalizer might produce \texttt{increasing($\mathcal{P}$)} for why DP underperforms~---~this predicate suggests that the gap is larger when the shortest path of the pinnable demands is longer.

%\added[id=PK]{~--~ this predicate suggests as the shortest path of pinnable demands increases the gap value increases across a significant portion of the instances.} 

% \behnaz{@Siva or @Pantea cite here please.}

\noindent \textbf{Open questions.} 
One may envision a solution similar to enumerative synthesis~\cite{gulwani2011synthesis, Alur2013SyntaxGuidedS, huang2020reconciling}, which searches through the grammar, finds all predicates that hold for a particular heuristic, and forms clauses that explain the heuristic's behavior. We need more work to define the generalizer's grammar and how to build valid clauses from them.

%\section{Discussion}

%We believe an extension to existing heuristic analyzers that allows operators to understand \emph{why} their heuristic underperforms and on a broader set of instances can provide meaningful benefits. 
%If we can find all the instances where a heuristic underperforms, we may be able to cache the optimal solution for them, use a different (potentially less efficient) heuristic for them, or apply other mitigations to avoid their impact. 
%If we know why a heuristic underperforms we may be able to fix it.

%If we can understand how these instances cause the heuristic to underperform we may be able to find a way to fix it. 

%We have shown there is a path forward that allows us to build a tool that provides this information. 
%But more research is needed (see \textbf{open questions} in~\S\ref{sec::dsl},~\S\ref{sec::sub-generator},~\S\ref{sec::explainer}, and~\S\ref{sec:generalizer}).

%: (1) to ensure our DSL can apply to SMT-based heuristic analyzers; (2) to find an expressive set of predicates that enable our decision-tree based refinement but are also expressive enough; and (3) to find a mechanism that allows us to extrapolate from the instance-based observations we provide through the subspace generator and the explainer (so that we can provide more meaningful insights to the user that apply across many problem instances).

% \section{Related work}
% \behnaz{@Pantea: please put a brain dump here and I can edit.}
\section{Related work}
\label{sec:related}

To our knowledge, this is the first work that focuses on a \emph{general} framework to provide more insights into the outputs of heuristic analysis tools~\cite{metaopt,venkatarxive} and provides an explainability feature for these tools. We build on prior work:

\noindent \textbf{Domain customized performance analyzers.} The work we do in $\mathcal{X}$Plain also applies to custom performance analyzers, which only apply to specific heuristics~\cite{CCAC, mina, venkatstarvation}.

\noindent \textbf{Explainable AI.} $\mathcal{X}$Plain resembles prior work in explainable AI, which provided more context around what different ML models predict~\cite{ribeiro2016should, lundberg2017unified, wachter2017counterfactual}. Parts of our solution (including the three types) are inspired by these works~\cite{hci, automl, phillips2021four}.

\noindent \textbf{Enumerative Synthesis.} This field generates programs that meet a specification through systematic enumeration of possible program candidates~\cite{gulwani2011synthesis, Alur2013SyntaxGuidedS, huang2020reconciling}. We believe these ideas can help us to design the generalizer.

\noindent \textbf{Large Language Models (LLMs).} we may be able to use LLMs~\cite{10256783} for various parts of our designs these include: to generate the DSL, to summarize Type 2 explanations, and to generate the grammer we need to produce Type 3 explanations. But LLMs are prone to hallucination~\cite{huang2023survey, liu2024exploring} and also require additional step-by-step mechanisms to guide them~\cite{wang2023plan, kambhampati2024llms}. We may be able to build a natural language interface that can help us automatically generate the DSL. Such an interface will enable non-experts to more easily use \sysname. This, too, is an interesting topic for future work. 

\section{Acknowledgements}

We would like to thank Basmira Nushi, Ishai Menache, Konstantina Mellou, Luke Marshall, Amin Khodaverdian, Chenning Li, Joe Chandler, and Weiyang Wang for their valuable comments. We also than the HotNets program committee for their valuable feedback.

\newpage
\bibliographystyle{ACM-Reference-Format} 
% \bibliography{reference.bbl}

\begin{thebibliography}{49}

%%% ====================================================================
%%% NOTE TO THE USER: you can override these defaults by providing
%%% customized versions of any of these macros before the \bibliography
%%% command.  Each of them MUST provide its own final punctuation,
%%% except for \shownote{}, \showDOI{}, and \showURL{}.  The latter two
%%% do not use final punctuation, in order to avoid confusing it with
%%% the Web address.
%%%
%%% To suppress output of a particular field, define its macro to expand
%%% to an empty string, or better, \unskip, like this:
%%%
%%% \newcommand{\showDOI}[1]{\unskip}   % LaTeX syntax
%%%
%%% \def \showDOI #1{\unskip}           % plain TeX syntax
%%%
%%% ====================================================================

\ifx \showCODEN    \undefined \def \showCODEN     #1{\unskip}     \fi
\ifx \showDOI      \undefined \def \showDOI       #1{#1}\fi
\ifx \showISBNx    \undefined \def \showISBNx     #1{\unskip}     \fi
\ifx \showISBNxiii \undefined \def \showISBNxiii  #1{\unskip}     \fi
\ifx \showISSN     \undefined \def \showISSN      #1{\unskip}     \fi
\ifx \showLCCN     \undefined \def \showLCCN      #1{\unskip}     \fi
\ifx \shownote     \undefined \def \shownote      #1{#1}          \fi
\ifx \showarticletitle \undefined \def \showarticletitle #1{#1}   \fi
\ifx \showURL      \undefined \def \showURL       {\relax}        \fi
% The following commands are used for tagged output and should be
% invisible to TeX
\providecommand\bibfield[2]{#2}
\providecommand\bibinfo[2]{#2}
\providecommand\natexlab[1]{#1}
\providecommand\showeprint[2][]{arXiv:#2}

\bibitem[Agarwal et~al\mbox{.}(2022)]%
        {venkat1}
\bibfield{author}{\bibinfo{person}{Anup Agarwal}, \bibinfo{person}{Venkat Arun}, \bibinfo{person}{Devdeep Ray}, \bibinfo{person}{Ruben Martins}, {and} \bibinfo{person}{Srinivasan Seshan}.} \bibinfo{year}{2022}\natexlab{}.
\newblock \showarticletitle{Automating network heuristic design and analysis}. In \bibinfo{booktitle}{\emph{Proceedings of the 21st ACM Workshop on Hot Topics in Networks}}. \bibinfo{pages}{8--16}.
\newblock


\bibitem[Agarwal et~al\mbox{.}(2024)]%
        {venkatNSDI24}
\bibfield{author}{\bibinfo{person}{Anup Agarwal}, \bibinfo{person}{Venkat Arun}, \bibinfo{person}{Devdeep Ray}, \bibinfo{person}{Ruben Martins}, {and} \bibinfo{person}{Srinivasan Seshan}.} \bibinfo{year}{2024}\natexlab{}.
\newblock \showarticletitle{Towards provably performant congestion control}. In \bibinfo{booktitle}{\emph{21st USENIX Symposium on Networked Systems Design and Implementation (NSDI 24)}}. \bibinfo{publisher}{USENIX Association}, \bibinfo{address}{Santa Clara, CA}, \bibinfo{pages}{951--978}.
\newblock
\showISBNx{978-1-939133-39-7}
\urldef\tempurl%
\url{https://www.usenix.org/conference/nsdi24/presentation/agarwal-anup}
\showURL{%
\tempurl}


\bibitem[Alur et~al\mbox{.}(2013)]%
        {Alur2013SyntaxGuidedS}
\bibfield{author}{\bibinfo{person}{Rajeev Alur}, \bibinfo{person}{Rastislav Bodik}, \bibinfo{person}{Garvit Juniwal}, \bibinfo{person}{Milo~M.K. Martin}, \bibinfo{person}{Mukund Raghothaman}, \bibinfo{person}{Sanjit~A. Seshia}, \bibinfo{person}{Rishabh Singh}, \bibinfo{person}{Armando Solar-Lezama}, \bibinfo{person}{Emina Torlak}, {and} \bibinfo{person}{Abhishek Udupa}.} \bibinfo{year}{2013}\natexlab{}.
\newblock \showarticletitle{Syntax-Guided Synthesis}.
\newblock \bibinfo{journal}{\emph{Proceedings of the International Conference on Formal Methods in Computer-Aided Design}} (\bibinfo{year}{2013}).
\newblock
\showISBNx{9780983567837}
\urldef\tempurl%
\url{https://doi.org/10.1109/FMCAD.2013.6679385}
\showDOI{\tempurl}


\bibitem[Amershi et~al\mbox{.}(2019)]%
        {hci}
\bibfield{author}{\bibinfo{person}{Saleema Amershi}, \bibinfo{person}{Andrew Begel}, \bibinfo{person}{Christian Bird}, \bibinfo{person}{Robert DeLine}, \bibinfo{person}{Harald Gall}, \bibinfo{person}{Ece Kamar}, \bibinfo{person}{Nachiappan Nagappan}, \bibinfo{person}{Besmira Nushi}, {and} \bibinfo{person}{Thomas Zimmermann}.} \bibinfo{year}{2019}\natexlab{}.
\newblock \showarticletitle{Software engineering for machine learning: A case study}. In \bibinfo{booktitle}{\emph{2019 IEEE/ACM 41st International Conference on Software Engineering: Software Engineering in Practice (ICSE-SEIP)}}. IEEE, \bibinfo{pages}{291--300}.
\newblock


\bibitem[Arashloo et~al\mbox{.}(2023)]%
        {mina}
\bibfield{author}{\bibinfo{person}{Mina~Tahmasbi Arashloo}, \bibinfo{person}{Ryan Beckett}, {and} \bibinfo{person}{Rachit Agarwal}.} \bibinfo{year}{2023}\natexlab{}.
\newblock \showarticletitle{Formal Methods for Network Performance Analysis}. In \bibinfo{booktitle}{\emph{20th USENIX Symposium on Networked Systems Design and Implementation (NSDI 23)}}. \bibinfo{pages}{645--661}.
\newblock


\bibitem[Arun et~al\mbox{.}(2022)]%
        {venkatstarvation}
\bibfield{author}{\bibinfo{person}{Venkat Arun}, \bibinfo{person}{Mohammad Alizadeh}, {and} \bibinfo{person}{Hari Balakrishnan}.} \bibinfo{year}{2022}\natexlab{}.
\newblock \showarticletitle{Starvation in end-to-end congestion control}. In \bibinfo{booktitle}{\emph{Proceedings of the ACM SIGCOMM 2022 Conference}} (Amsterdam, Netherlands) \emph{(\bibinfo{series}{SIGCOMM '22})}. \bibinfo{publisher}{Association for Computing Machinery}, \bibinfo{address}{New York, NY, USA}, \bibinfo{pages}{177–192}.
\newblock
\showISBNx{9781450394208}
\urldef\tempurl%
\url{https://doi.org/10.1145/3544216.3544223}
\showDOI{\tempurl}


\bibitem[Arun et~al\mbox{.}(2021)]%
        {CCAC}
\bibfield{author}{\bibinfo{person}{Venkat Arun}, \bibinfo{person}{Mina~Tahmasbi Arashloo}, \bibinfo{person}{Ahmed Saeed}, \bibinfo{person}{Mohammad Alizadeh}, {and} \bibinfo{person}{Hari Balakrishnan}.} \bibinfo{year}{2021}\natexlab{}.
\newblock \showarticletitle{Toward formally verifying congestion control behavior}. In \bibinfo{booktitle}{\emph{Proceedings of the 2021 ACM SIGCOMM 2021 Conference}}. \bibinfo{pages}{1--16}.
\newblock


\bibitem[Arzani et~al\mbox{.}(2021)]%
        {automl}
\bibfield{author}{\bibinfo{person}{Behnaz Arzani}, \bibinfo{person}{Kevin Hsieh}, {and} \bibinfo{person}{Haoxian Chen}.} \bibinfo{year}{2021}\natexlab{}.
\newblock \showarticletitle{Interpretable feedback for AutoML and a proposal for domain-customized AutoML for networking}. In \bibinfo{booktitle}{\emph{Proceedings of the 20th ACM Workshop on Hot Topics in Networks}}. \bibinfo{pages}{53--60}.
\newblock


\bibitem[Barbalho et~al\mbox{.}(2023)]%
        {vmlifetime}
\bibfield{author}{\bibinfo{person}{Hugo Barbalho}, \bibinfo{person}{Patricia Kovaleski}, \bibinfo{person}{Beibin Li}, \bibinfo{person}{Luke Marshall}, \bibinfo{person}{Marco Molinaro}, \bibinfo{person}{Abhisek Pan}, \bibinfo{person}{Eli Cortez}, \bibinfo{person}{Matheus Leao}, \bibinfo{person}{Harsh Patwari}, \bibinfo{person}{Zuzu Tang}, {et~al\mbox{.}}} \bibinfo{year}{2023}\natexlab{}.
\newblock \showarticletitle{Virtual machine allocation with lifetime predictions}.
\newblock \bibinfo{journal}{\emph{Proceedings of Machine Learning and Systems}}  \bibinfo{volume}{5} (\bibinfo{year}{2023}).
\newblock


\bibitem[Beckett et~al\mbox{.}(2017)]%
        {minesweeper}
\bibfield{author}{\bibinfo{person}{Ryan Beckett}, \bibinfo{person}{Aarti Gupta}, \bibinfo{person}{Ratul Mahajan}, {and} \bibinfo{person}{David Walker}.} \bibinfo{year}{2017}\natexlab{}.
\newblock \showarticletitle{A General Approach to Network Configuration Verification}. In \bibinfo{booktitle}{\emph{Proceedings of the Conference of the ACM Special Interest Group on Data Communication}} (Los Angeles, CA, USA) \emph{(\bibinfo{series}{SIGCOMM '17})}. \bibinfo{publisher}{ACM}, \bibinfo{address}{New York, NY, USA}, \bibinfo{pages}{155--168}.
\newblock
\showISBNx{978-1-4503-4653-5}
\urldef\tempurl%
\url{https://doi.org/10.1145/3098822.3098834}
\showDOI{\tempurl}


\bibitem[Bertsimas and Tsitsiklis(1997)]%
        {bertsimas}
\bibfield{author}{\bibinfo{person}{Dimitris Bertsimas} {and} \bibinfo{person}{John~N Tsitsiklis}.} \bibinfo{year}{1997}\natexlab{}.
\newblock \bibinfo{booktitle}{\emph{Introduction to linear optimization}}. Vol.~\bibinfo{volume}{6}.
\newblock \bibinfo{publisher}{Athena Scientific Belmont, MA}.
\newblock


\bibitem[Boyd and Vandenberghe(2004)]%
        {boyd2004convex}
\bibfield{author}{\bibinfo{person}{Stephen~P Boyd} {and} \bibinfo{person}{Lieven Vandenberghe}.} \bibinfo{year}{2004}\natexlab{}.
\newblock \bibinfo{booktitle}{\emph{Convex optimization}}.
\newblock \bibinfo{publisher}{Cambridge university press}.
\newblock


\bibitem[Chen et~al\mbox{.}(2004)]%
        {chen2004failure}
\bibfield{author}{\bibinfo{person}{Mike Chen}, \bibinfo{person}{Alice~X Zheng}, \bibinfo{person}{Jim Lloyd}, \bibinfo{person}{Michael~I Jordan}, {and} \bibinfo{person}{Eric Brewer}.} \bibinfo{year}{2004}\natexlab{}.
\newblock \showarticletitle{Failure diagnosis using decision trees}. In \bibinfo{booktitle}{\emph{International Conference on Autonomic Computing, 2004. Proceedings.}} IEEE, \bibinfo{pages}{36--43}.
\newblock


\bibitem[Fayaz et~al\mbox{.}(2016)]%
        {era}
\bibfield{author}{\bibinfo{person}{Seyed~K. Fayaz}, \bibinfo{person}{Tushar Sharma}, \bibinfo{person}{Ari Fogel}, \bibinfo{person}{Ratul Mahajan}, \bibinfo{person}{Todd Millstein}, \bibinfo{person}{Vyas Sekar}, {and} \bibinfo{person}{George Varghese}.} \bibinfo{year}{2016}\natexlab{}.
\newblock \showarticletitle{Efficient Network Reachability Analysis Using a Succinct Control Plane Representation}. In \bibinfo{booktitle}{\emph{12th {USENIX} Symposium on Operating Systems Design and Implementation ({OSDI} 16)}}. \bibinfo{publisher}{{USENIX} Association}, \bibinfo{address}{Savannah, GA}, \bibinfo{pages}{217--232}.
\newblock
\showISBNx{978-1-931971-33-1}
\urldef\tempurl%
\url{https://www.usenix.org/conference/osdi16/technical-sessions/presentation/fayaz}
\showURL{%
\tempurl}


\bibitem[Gember-Jacobson et~al\mbox{.}(2016)]%
        {arc}
\bibfield{author}{\bibinfo{person}{Aaron Gember-Jacobson}, \bibinfo{person}{Raajay Viswanathan}, \bibinfo{person}{Aditya Akella}, {and} \bibinfo{person}{Ratul Mahajan}.} \bibinfo{year}{2016}\natexlab{}.
\newblock \showarticletitle{Fast Control Plane Analysis Using an Abstract Representation}. In \bibinfo{booktitle}{\emph{Proceedings of the 2016 ACM SIGCOMM Conference}} (Florianopolis, Brazil) \emph{(\bibinfo{series}{SIGCOMM '16})}. \bibinfo{publisher}{ACM}, \bibinfo{address}{New York, NY, USA}, \bibinfo{pages}{300--313}.
\newblock
\showISBNx{978-1-4503-4193-6}
\urldef\tempurl%
\url{https://doi.org/10.1145/2934872.2934876}
\showDOI{\tempurl}


\bibitem[Goel et~al\mbox{.}(2023)]%
        {venkatarxive}
\bibfield{author}{\bibinfo{person}{Saksham Goel}, \bibinfo{person}{Benjamin Mikek}, \bibinfo{person}{Jehad Aly}, \bibinfo{person}{Venkat Arun}, \bibinfo{person}{Ahmed Saeed}, {and} \bibinfo{person}{Aditya Akella}.} \bibinfo{year}{2023}\natexlab{}.
\newblock \showarticletitle{Quantitative verification of scheduling heuristics}.
\newblock \bibinfo{journal}{\emph{arXiv preprint arXiv:2301.04205}} (\bibinfo{year}{2023}).
\newblock


\bibitem[Grumberg et~al\mbox{.}(2004)]%
        {allsat2}
\bibfield{author}{\bibinfo{person}{Orna Grumberg}, \bibinfo{person}{Assaf Schuster}, {and} \bibinfo{person}{Avi Yadgar}.} \bibinfo{year}{2004}\natexlab{}.
\newblock \showarticletitle{Memory Efficient All-Solutions SAT Solver and Its Application for Reachability Analysis}. In \bibinfo{booktitle}{\emph{Formal Methods in Computer-Aided Design}}, \bibfield{editor}{\bibinfo{person}{Alan~J. Hu} {and} \bibinfo{person}{Andrew~K. Martin}} (Eds.). \bibinfo{publisher}{Springer Berlin Heidelberg}, \bibinfo{address}{Berlin, Heidelberg}, \bibinfo{pages}{275--289}.
\newblock
\showISBNx{978-3-540-30494-4}


\bibitem[Gulwani et~al\mbox{.}(2011)]%
        {gulwani2011synthesis}
\bibfield{author}{\bibinfo{person}{Sumit Gulwani}, \bibinfo{person}{Susmit Jha}, \bibinfo{person}{Ashish Tiwari}, {and} \bibinfo{person}{Ramarathnam Venkatesan}.} \bibinfo{year}{2011}\natexlab{}.
\newblock \showarticletitle{Synthesis of Loop-free Programs}. In \bibinfo{booktitle}{\emph{Proceedings of the 32nd ACM SIGPLAN Conference on Programming Language Design and Implementation (PLDI)}}. ACM, \bibinfo{pages}{62--73}.
\newblock


\bibitem[Horn et~al\mbox{.}(2017)]%
        {deltanet}
\bibfield{author}{\bibinfo{person}{Alex Horn}, \bibinfo{person}{Ali Kheradmand}, {and} \bibinfo{person}{Mukul Prasad}.} \bibinfo{year}{2017}\natexlab{}.
\newblock \showarticletitle{Delta-net: Real-time Network Verification Using Atoms}. In \bibinfo{booktitle}{\emph{14th {USENIX} Symposium on Networked Systems Design and Implementation ({NSDI} 17)}}. \bibinfo{publisher}{{USENIX} Association}, \bibinfo{address}{Boston, MA}, \bibinfo{pages}{735--749}.
\newblock
\showISBNx{978-1-931971-37-9}
\urldef\tempurl%
\url{https://www.usenix.org/conference/nsdi17/technical-sessions/presentation/horn-alex}
\showURL{%
\tempurl}


\bibitem[Huang et~al\mbox{.}(2020)]%
        {huang2020reconciling}
\bibfield{author}{\bibinfo{person}{Kangjing Huang}, \bibinfo{person}{Xiaokang Qiu}, \bibinfo{person}{Peiyuan Shen}, {and} \bibinfo{person}{Yanjun Wang}.} \bibinfo{year}{2020}\natexlab{}.
\newblock \showarticletitle{Reconciling enumerative and deductive program synthesis}. In \bibinfo{booktitle}{\emph{Proceedings of the 41st ACM SIGPLAN Conference on Programming Language Design and Implementation}}. \bibinfo{pages}{1159--1174}.
\newblock


\bibitem[Huang et~al\mbox{.}(2023)]%
        {huang2023survey}
\bibfield{author}{\bibinfo{person}{Lei Huang}, \bibinfo{person}{Weijiang Yu}, \bibinfo{person}{Weitao Ma}, \bibinfo{person}{Weihong Zhong}, \bibinfo{person}{Zhangyin Feng}, \bibinfo{person}{Haotian Wang}, \bibinfo{person}{Qianglong Chen}, \bibinfo{person}{Weihua Peng}, \bibinfo{person}{Xiaocheng Feng}, \bibinfo{person}{Bing Qin}, {et~al\mbox{.}}} \bibinfo{year}{2023}\natexlab{}.
\newblock \showarticletitle{A survey on hallucination in large language models: Principles, taxonomy, challenges, and open questions}.
\newblock \bibinfo{journal}{\emph{arXiv preprint arXiv:2311.05232}} (\bibinfo{year}{2023}).
\newblock


\bibitem[Jayaraman et~al\mbox{.}(2019)]%
        {fibverifier}
\bibfield{author}{\bibinfo{person}{Karthick Jayaraman}, \bibinfo{person}{Nikolaj Bjorner}, \bibinfo{person}{Jitu Padhye}, \bibinfo{person}{Amar Agrawal}, \bibinfo{person}{Ashish Bhargava}, \bibinfo{person}{Paul-Andre~C Bissonnette}, \bibinfo{person}{Shane Foster}, \bibinfo{person}{Andrew Helwer}, \bibinfo{person}{Mark Kasten}, \bibinfo{person}{Ivan Lee}, \bibinfo{person}{Anup Namdhari}, \bibinfo{person}{Haseeb Niaz}, \bibinfo{person}{Aniruddha Parkhi}, \bibinfo{person}{Hanukumar Pinnamraju}, \bibinfo{person}{Adrian Power}, \bibinfo{person}{Neha~Milind Raje}, {and} \bibinfo{person}{Parag Sharma}.} \bibinfo{year}{2019}\natexlab{}.
\newblock \showarticletitle{Validating Datacenters at Scale}. In \bibinfo{booktitle}{\emph{Proceedings of the ACM Special Interest Group on Data Communication}} (Beijing, China) \emph{(\bibinfo{series}{SIGCOMM '19})}. \bibinfo{publisher}{ACM}, \bibinfo{address}{New York, NY, USA}, \bibinfo{pages}{200--213}.
\newblock
\showISBNx{978-1-4503-5956-6}
\urldef\tempurl%
\url{https://doi.org/10.1145/3341302.3342094}
\showDOI{\tempurl}


\bibitem[Jayaraman et~al\mbox{.}(2014)]%
        {secguru}
\bibfield{author}{\bibinfo{person}{Karthick Jayaraman}, \bibinfo{person}{Nikolaj Bjørner}, \bibinfo{person}{Geoff Outhred}, {and} \bibinfo{person}{Charlie Kaufman}.} \bibinfo{year}{2014}\natexlab{}.
\newblock \bibinfo{booktitle}{\emph{Automated Analysis and Debugging of Network Connectivity Policies}}.
\newblock \bibinfo{type}{{T}echnical {R}eport} MSR-TR-2014-102. \bibinfo{institution}{Microsoft}.
\newblock


\bibitem[Kakarla et~al\mbox{.}(2020a)]%
        {groot}
\bibfield{author}{\bibinfo{person}{Siva Kesava~Reddy Kakarla}, \bibinfo{person}{Ryan Beckett}, \bibinfo{person}{Behnaz Arzani}, \bibinfo{person}{Todd Millstein}, {and} \bibinfo{person}{George Varghese}.} \bibinfo{year}{2020}\natexlab{a}.
\newblock \showarticletitle{GRoot: Proactive Verification of DNS Configurations}. In \bibinfo{booktitle}{\emph{Proceedings of the Annual Conference of the ACM Special Interest Group on Data Communication on the Applications, Technologies, Architectures, and Protocols for Computer Communication}} (Virtual Event, USA) \emph{(\bibinfo{series}{SIGCOMM ’20})}. \bibinfo{publisher}{Association for Computing Machinery}, \bibinfo{address}{New York, NY, USA}, \bibinfo{pages}{310–328}.
\newblock
\showISBNx{9781450379557}
\urldef\tempurl%
\url{https://doi.org/10.1145/3387514.3405871}
\showDOI{\tempurl}


\bibitem[Kakarla et~al\mbox{.}(2020b)]%
        {selfstarter}
\bibfield{author}{\bibinfo{person}{Siva Kesava~Reddy Kakarla}, \bibinfo{person}{Alan Tang}, \bibinfo{person}{Ryan Beckett}, \bibinfo{person}{Karthick Jayaraman}, \bibinfo{person}{Todd Millstein}, \bibinfo{person}{Yuval Tamir}, {and} \bibinfo{person}{George Varghese}.} \bibinfo{year}{2020}\natexlab{b}.
\newblock \showarticletitle{Finding Network Misconfigurations by Automatic Template Inference}. In \bibinfo{booktitle}{\emph{17th {USENIX} Symposium on Networked Systems Design and Implementation ({NSDI} 20)}}. \bibinfo{publisher}{{USENIX} Association}, \bibinfo{address}{Santa Clara, CA}, \bibinfo{pages}{999--1013}.
\newblock
\showISBNx{978-1-939133-13-7}
\urldef\tempurl%
\url{https://www.usenix.org/conference/nsdi20/presentation/kakarla}
\showURL{%
\tempurl}


\bibitem[Kambhampati et~al\mbox{.}(2024)]%
        {kambhampati2024llms}
\bibfield{author}{\bibinfo{person}{Subbarao Kambhampati}, \bibinfo{person}{Karthik Valmeekam}, \bibinfo{person}{Lin Guan}, \bibinfo{person}{Kaya Stechly}, \bibinfo{person}{Mudit Verma}, \bibinfo{person}{Siddhant Bhambri}, \bibinfo{person}{Lucas Saldyt}, {and} \bibinfo{person}{Anil Murthy}.} \bibinfo{year}{2024}\natexlab{}.
\newblock \showarticletitle{LLMs Can't Plan, But Can Help Planning in LLM-Modulo Frameworks}.
\newblock \bibinfo{journal}{\emph{arXiv preprint arXiv:2402.01817}} (\bibinfo{year}{2024}).
\newblock


\bibitem[Kazemian et~al\mbox{.}(2012)]%
        {hsa}
\bibfield{author}{\bibinfo{person}{Peyman Kazemian}, \bibinfo{person}{George Varghese}, {and} \bibinfo{person}{Nick McKeown}.} \bibinfo{year}{2012}\natexlab{}.
\newblock \showarticletitle{Header Space Analysis: Static Checking for Networks}. In \bibinfo{booktitle}{\emph{9th {USENIX} Symposium on Networked Systems Design and Implementation ({NSDI} 12)}}. \bibinfo{publisher}{{USENIX} Association}, \bibinfo{address}{San Jose, CA}, \bibinfo{pages}{113--126}.
\newblock
\showISBNx{978-931971-92-8}
\urldef\tempurl%
\url{https://www.usenix.org/conference/nsdi12/technical-sessions/presentation/kazemian}
\showURL{%
\tempurl}


\bibitem[Khurshid et~al\mbox{.}(2013)]%
        {veriflow}
\bibfield{author}{\bibinfo{person}{Ahmed Khurshid}, \bibinfo{person}{Xuan Zou}, \bibinfo{person}{Wenxuan Zhou}, \bibinfo{person}{Matthew Caesar}, {and} \bibinfo{person}{P.~Brighten Godfrey}.} \bibinfo{year}{2013}\natexlab{}.
\newblock \showarticletitle{VeriFlow: Verifying Network-Wide Invariants in Real Time}. In \bibinfo{booktitle}{\emph{Presented as part of the 10th {USENIX} Symposium on Networked Systems Design and Implementation ({NSDI} 13)}}. \bibinfo{publisher}{{USENIX}}, \bibinfo{address}{Lombard, IL}, \bibinfo{pages}{15--27}.
\newblock
\showISBNx{978-1-931971-00-3}
\urldef\tempurl%
\url{https://www.usenix.org/conference/nsdi13/technical-sessions/presentation/khurshid}
\showURL{%
\tempurl}


\bibitem[Liu et~al\mbox{.}(2024)]%
        {liu2024exploring}
\bibfield{author}{\bibinfo{person}{Fang Liu}, \bibinfo{person}{Yang Liu}, \bibinfo{person}{Lin Shi}, \bibinfo{person}{Houkun Huang}, \bibinfo{person}{Ruifeng Wang}, \bibinfo{person}{Zhen Yang}, {and} \bibinfo{person}{Li Zhang}.} \bibinfo{year}{2024}\natexlab{}.
\newblock \showarticletitle{Exploring and evaluating hallucinations in llm-powered code generation}.
\newblock \bibinfo{journal}{\emph{arXiv preprint arXiv:2404.00971}} (\bibinfo{year}{2024}).
\newblock


\bibitem[Lopes et~al\mbox{.}(2015)]%
        {nod}
\bibfield{author}{\bibinfo{person}{Nuno~P. Lopes}, \bibinfo{person}{Nikolaj Bj\o{}rner}, \bibinfo{person}{Patrice Godefroid}, \bibinfo{person}{Karthick Jayaraman}, {and} \bibinfo{person}{George Varghese}.} \bibinfo{year}{2015}\natexlab{}.
\newblock \showarticletitle{Checking Beliefs in Dynamic Networks}. In \bibinfo{booktitle}{\emph{Proceedings of the 12th USENIX Conference on Networked Systems Design and Implementation}} (Oakland, CA) \emph{(\bibinfo{series}{NSDI'15})}. \bibinfo{publisher}{USENIX Association}, \bibinfo{address}{USA}, \bibinfo{pages}{499–512}.
\newblock
\showISBNx{9781931971218}


\bibitem[Lundberg and Lee(2017)]%
        {lundberg2017unified}
\bibfield{author}{\bibinfo{person}{Scott~M Lundberg} {and} \bibinfo{person}{Su-In Lee}.} \bibinfo{year}{2017}\natexlab{}.
\newblock \showarticletitle{A unified approach to interpreting model predictions}.
\newblock \bibinfo{journal}{\emph{Advances in neural information processing systems}}  \bibinfo{volume}{30} (\bibinfo{year}{2017}).
\newblock


\bibitem[Mai et~al\mbox{.}(2011)]%
        {anteater}
\bibfield{author}{\bibinfo{person}{Haohui Mai}, \bibinfo{person}{Ahmed Khurshid}, \bibinfo{person}{Rachit Agarwal}, \bibinfo{person}{Matthew Caesar}, \bibinfo{person}{P.~Brighten Godfrey}, {and} \bibinfo{person}{Samuel~Talmadge King}.} \bibinfo{year}{2011}\natexlab{}.
\newblock \showarticletitle{Debugging the Data Plane with Anteater}.
\newblock \bibinfo{journal}{\emph{SIGCOMM Comput. Commun. Rev.}} \bibinfo{volume}{41}, \bibinfo{number}{4} (\bibinfo{date}{aug} \bibinfo{year}{2011}), \bibinfo{pages}{290–301}.
\newblock
\showISSN{0146-4833}
\urldef\tempurl%
\url{https://doi.org/10.1145/2043164.2018470}
\showDOI{\tempurl}


\bibitem[Massart(1990)]%
        {massart1990tight}
\bibfield{author}{\bibinfo{person}{P. Massart}.} \bibinfo{year}{1990}\natexlab{}.
\newblock \showarticletitle{The Tight Constant in the Dvoretzky-Kiefer-Wolfowitz Inequality}.
\newblock \bibinfo{journal}{\emph{The Annals of Probability}} \bibinfo{volume}{18}, \bibinfo{number}{3} (\bibinfo{year}{1990}), \bibinfo{pages}{1269--1283}.
\newblock
\urldef\tempurl%
\url{https://doi.org/10.1214/aop/1176990746}
\showDOI{\tempurl}


\bibitem[McMillan(2002)]%
        {allsat1}
\bibfield{author}{\bibinfo{person}{Ken~L. McMillan}.} \bibinfo{year}{2002}\natexlab{}.
\newblock \showarticletitle{Applying SAT Methods in Unbounded Symbolic Model Checking}. In \bibinfo{booktitle}{\emph{Computer Aided Verification}}, \bibfield{editor}{\bibinfo{person}{Ed~Brinksma} {and} \bibinfo{person}{Kim~Guldstrand Larsen}} (Eds.). \bibinfo{publisher}{Springer Berlin Heidelberg}, \bibinfo{address}{Berlin, Heidelberg}, \bibinfo{pages}{250--264}.
\newblock
\showISBNx{978-3-540-45657-5}


\bibitem[Namyar et~al\mbox{.}(2024)]%
        {metaopt}
\bibfield{author}{\bibinfo{person}{Pooria Namyar}, \bibinfo{person}{Behnaz Arzani}, \bibinfo{person}{Ryan Beckett}, \bibinfo{person}{Santiago Segarra}, \bibinfo{person}{Himanshu Raj}, \bibinfo{person}{Umesh Krishnaswamy}, \bibinfo{person}{Ramesh Govindan}, {and} \bibinfo{person}{Srikanth Kandula}.} \bibinfo{year}{2024}\natexlab{}.
\newblock \showarticletitle{Finding Adversarial Inputs for Heuristics using Multi-level Optimization}. In \bibinfo{booktitle}{\emph{21st USENIX Symposium on Networked Systems Design and Implementation (NSDI 24)}}. \bibinfo{publisher}{USENIX Association}, \bibinfo{address}{Santa Clara, CA}, \bibinfo{pages}{927--949}.
\newblock
\showISBNx{978-1-939133-39-7}
\urldef\tempurl%
\url{https://www.usenix.org/conference/nsdi24/presentation/namyar-finding}
\showURL{%
\tempurl}


\bibitem[Panigrahy et~al\mbox{.}(2011)]%
        {theoryVBP}
\bibfield{author}{\bibinfo{person}{Rina Panigrahy}, \bibinfo{person}{Kunal Talwar}, \bibinfo{person}{Lincoln Uyeda}, {and} \bibinfo{person}{Udi Wieder}.} \bibinfo{year}{2011}\natexlab{}.
\newblock \showarticletitle{Heuristics for vector bin packing}.
\newblock \bibinfo{journal}{\emph{research. microsoft. com}} (\bibinfo{year}{2011}).
\newblock


\bibitem[Phillips et~al\mbox{.}(2021)]%
        {phillips2021four}
\bibfield{author}{\bibinfo{person}{P~Jonathon Phillips}, \bibinfo{person}{P~Jonathon Phillips}, \bibinfo{person}{Carina~A Hahn}, \bibinfo{person}{Peter~C Fontana}, \bibinfo{person}{Amy~N Yates}, \bibinfo{person}{Kristen Greene}, \bibinfo{person}{David~A Broniatowski}, {and} \bibinfo{person}{Mark~A Przybocki}.} \bibinfo{year}{2021}\natexlab{}.
\newblock \showarticletitle{Four principles of explainable artificial intelligence}.
\newblock  (\bibinfo{year}{2021}).
\newblock


\bibitem[Ribeiro et~al\mbox{.}(2016)]%
        {ribeiro2016should}
\bibfield{author}{\bibinfo{person}{Marco~Tulio Ribeiro}, \bibinfo{person}{Sameer Singh}, {and} \bibinfo{person}{Carlos Guestrin}.} \bibinfo{year}{2016}\natexlab{}.
\newblock \showarticletitle{" Why should i trust you?" Explaining the predictions of any classifier}. In \bibinfo{booktitle}{\emph{Proceedings of the 22nd ACM SIGKDD international conference on knowledge discovery and data mining}}. \bibinfo{pages}{1135--1144}.
\newblock


\bibitem[Tang et~al\mbox{.}(2021)]%
        {campion}
\bibfield{author}{\bibinfo{person}{Alan Tang}, \bibinfo{person}{Siva Kesava~Reddy Kakarla}, \bibinfo{person}{Ryan Beckett}, \bibinfo{person}{Ennan Zhai}, \bibinfo{person}{Matt Brown}, \bibinfo{person}{Todd Millstein}, \bibinfo{person}{Yuval Tamir}, {and} \bibinfo{person}{George Varghese}.} \bibinfo{year}{2021}\natexlab{}.
\newblock \showarticletitle{Campion: Debugging Router Configuration Differences}. In \bibinfo{booktitle}{\emph{Proceedings of the 2021 ACM SIGCOMM 2021 Conference}} (Virtual Event, USA) \emph{(\bibinfo{series}{SIGCOMM '21})}. \bibinfo{publisher}{Association for Computing Machinery}, \bibinfo{address}{New York, NY, USA}, \bibinfo{pages}{748–761}.
\newblock
\showISBNx{9781450383837}
\urldef\tempurl%
\url{https://doi.org/10.1145/3452296.3472925}
\showDOI{\tempurl}


\bibitem[Tian et~al\mbox{.}(2019)]%
        {tian19safely}
\bibfield{author}{\bibinfo{person}{Bingchuan Tian}, \bibinfo{person}{Xinyi Zhang}, \bibinfo{person}{Ennan Zhai}, \bibinfo{person}{Hongqiang~Harry Liu}, \bibinfo{person}{Qiaobo Ye}, \bibinfo{person}{Chunsheng Wang}, \bibinfo{person}{Xin Wu}, \bibinfo{person}{Zhiming Ji}, \bibinfo{person}{Yihong Sang}, \bibinfo{person}{Ming Zhang}, \bibinfo{person}{Da Yu}, \bibinfo{person}{Chen Tian}, \bibinfo{person}{Haitao Zheng}, {and} \bibinfo{person}{Ben~Y. Zhao}.} \bibinfo{year}{2019}\natexlab{}.
\newblock \showarticletitle{Safely and Automatically Updating In-Network ACL Configurations with Intent Language}. In \bibinfo{booktitle}{\emph{Proceedings of the ACM Special Interest Group on Data Communication}} (Beijing, China) \emph{(\bibinfo{series}{SIGCOMM '19})}. \bibinfo{publisher}{Association for Computing Machinery}, \bibinfo{address}{New York, NY, USA}, \bibinfo{pages}{214–226}.
\newblock
\showISBNx{9781450359566}
\urldef\tempurl%
\url{https://doi.org/10.1145/3341302.3342088}
\showDOI{\tempurl}


\bibitem[Torgersen(2007)]%
        {linq}
\bibfield{author}{\bibinfo{person}{Mads Torgersen}.} \bibinfo{year}{2007}\natexlab{}.
\newblock \showarticletitle{Querying in C\# how language integrated query (LINQ) works}. In \bibinfo{booktitle}{\emph{Companion to the 22nd ACM SIGPLAN conference on Object-oriented programming systems and applications companion}}. \bibinfo{pages}{852--853}.
\newblock


\bibitem[Wachter et~al\mbox{.}(2017)]%
        {wachter2017counterfactual}
\bibfield{author}{\bibinfo{person}{Sandra Wachter}, \bibinfo{person}{Brent Mittelstadt}, {and} \bibinfo{person}{Chris Russell}.} \bibinfo{year}{2017}\natexlab{}.
\newblock \showarticletitle{Counterfactual explanations without opening the black box: Automated decisions and the GDPR}.
\newblock \bibinfo{journal}{\emph{Harv. JL \& Tech.}}  \bibinfo{volume}{31} (\bibinfo{year}{2017}), \bibinfo{pages}{841}.
\newblock


\bibitem[Wang et~al\mbox{.}(2023)]%
        {wang2023plan}
\bibfield{author}{\bibinfo{person}{Lei Wang}, \bibinfo{person}{Wanyu Xu}, \bibinfo{person}{Yihuai Lan}, \bibinfo{person}{Zhiqiang Hu}, \bibinfo{person}{Yunshi Lan}, \bibinfo{person}{Roy Ka-Wei Lee}, {and} \bibinfo{person}{Ee-Peng Lim}.} \bibinfo{year}{2023}\natexlab{}.
\newblock \showarticletitle{Plan-and-solve prompting: Improving zero-shot chain-of-thought reasoning by large language models}.
\newblock \bibinfo{journal}{\emph{arXiv preprint arXiv:2305.04091}} (\bibinfo{year}{2023}).
\newblock


\bibitem[Wilcoxon(1945)]%
        {wilcoxon1945individual}
\bibfield{author}{\bibinfo{person}{Frank Wilcoxon}.} \bibinfo{year}{1945}\natexlab{}.
\newblock \showarticletitle{Individual comparisons by ranking methods}.
\newblock \bibinfo{journal}{\emph{Biometrics bulletin}} \bibinfo{volume}{1}, \bibinfo{number}{6} (\bibinfo{year}{1945}), \bibinfo{pages}{80--83}.
\newblock


\bibitem[Woeginger(1997)]%
        {vbpAPX}
\bibfield{author}{\bibinfo{person}{Gerhard~J Woeginger}.} \bibinfo{year}{1997}\natexlab{}.
\newblock \showarticletitle{There is no asymptotic PTAS for two-dimensional vector packing}.
\newblock \bibinfo{journal}{\emph{Inform. Process. Lett.}} \bibinfo{volume}{64}, \bibinfo{number}{6} (\bibinfo{year}{1997}), \bibinfo{pages}{293--297}.
\newblock


\bibitem[Yan et~al\mbox{.}(2023)]%
        {10256783}
\bibfield{author}{\bibinfo{person}{Zheyu Yan}, \bibinfo{person}{Yifan Qin}, \bibinfo{person}{Xiaobo~Sharon Hu}, {and} \bibinfo{person}{Yiyu Shi}.} \bibinfo{year}{2023}\natexlab{}.
\newblock \showarticletitle{On the Viability of Using LLMs for SW/HW Co-Design: An Example in Designing CiM DNN Accelerators}. In \bibinfo{booktitle}{\emph{2023 IEEE 36th International System-on-Chip Conference (SOCC)}}. \bibinfo{pages}{1--6}.
\newblock
\urldef\tempurl%
\url{https://doi.org/10.1109/SOCC58585.2023.10256783}
\showDOI{\tempurl}


\bibitem[Yang and Lam(2016)]%
        {atomic}
\bibfield{author}{\bibinfo{person}{Hongkun Yang} {and} \bibinfo{person}{Simon~S. Lam}.} \bibinfo{year}{2016}\natexlab{}.
\newblock \showarticletitle{Real-time Verification of Network Properties Using Atomic Predicates}.
\newblock \bibinfo{journal}{\emph{IEEE/ACM Trans. Netw.}} \bibinfo{volume}{24}, \bibinfo{number}{2} (\bibinfo{date}{April} \bibinfo{year}{2016}), \bibinfo{pages}{887--900}.
\newblock
\showISSN{1063-6692}
\urldef\tempurl%
\url{https://doi.org/10.1109/TNET.2015.2398197}
\showDOI{\tempurl}


\bibitem[Yu et~al\mbox{.}(2014)]%
        {allsat3}
\bibfield{author}{\bibinfo{person}{Yinlei Yu}, \bibinfo{person}{Pramod Subramanyan}, \bibinfo{person}{Nestan Tsiskaridze}, {and} \bibinfo{person}{Sharad Malik}.} \bibinfo{year}{2014}\natexlab{}.
\newblock \showarticletitle{All-SAT Using Minimal Blocking Clauses}. In \bibinfo{booktitle}{\emph{2014 27th International Conference on VLSI Design and 2014 13th International Conference on Embedded Systems}}. \bibinfo{pages}{86--91}.
\newblock
\urldef\tempurl%
\url{https://doi.org/10.1109/VLSID.2014.22}
\showDOI{\tempurl}


\bibitem[Zhang et~al\mbox{.}(2020)]%
        {apkeep}
\bibfield{author}{\bibinfo{person}{Peng Zhang}, \bibinfo{person}{Xu Liu}, \bibinfo{person}{Hongkun Yang}, \bibinfo{person}{Ning Kang}, \bibinfo{person}{Zhengchang Gu}, {and} \bibinfo{person}{Hao Li}.} \bibinfo{year}{2020}\natexlab{}.
\newblock \showarticletitle{APKeep: Realtime Verification for Real Networks}. In \bibinfo{booktitle}{\emph{17th {USENIX} Symposium on Networked Systems Design and Implementation ({NSDI} 20)}}. \bibinfo{publisher}{{USENIX} Association}, \bibinfo{address}{Santa Clara, CA}, \bibinfo{pages}{241--255}.
\newblock
\showISBNx{978-1-939133-13-7}
\urldef\tempurl%
\url{https://www.usenix.org/conference/nsdi20/presentation/zhang-peng}
\showURL{%
\tempurl}


\end{thebibliography}
%%% -*-BibTeX-*-
%%% Do NOT edit. File created by BibTeX with style
%%% ACM-Reference-Format-Journals [18-Jan-2012].

\ifshowproof
\begin{figure*}[t!]
    \centering
    \subcaptionbox{\label{fig:split}\textsc{split node}}[0.3\textwidth]{%
        \begin{tikzpicture}[baseline, node distance=1.5cm and 2cm, auto, scale=0.9, every node/.style={transform shape}]
            % Define styles
            \tikzset{
                splitstyle/.style={circle, draw=black, fill=blue!10, text centered, minimum height=3em, minimum width=3em},
                arrowstyle/.style={-{Latex[length=1.5mm]}, draw=black},
            }
            % Split node
            \node[splitstyle] (Split) {Split};
            % Incoming nodes
            \node[above left of=Split] (In1) {$f_{(i_0,n)}$};
            \node[above right of=Split] (In2) {$f_{(i_2,n)}$};
            \node[above=0.5cm of Split] (In3) {$f_{(i_1,n)}$};
            % Outgoing nodes
            \node[below left of=Split] (Out1) {$f_{(n,j_0)}$};
            \node[below right of=Split] (Out2) {$f_{(n,j_1)}$};
            % Arrows
            \draw[arrowstyle] (In1) -- (Split);
            \draw[arrowstyle] (In2) -- (Split);
            \draw[arrowstyle] (In3) -- (Split);
            \draw[arrowstyle] (Split) -- (Out1);
            \draw[arrowstyle] (Split) -- (Out2);
            \node at ($(Split) - (0,1.3)$) [below] {$f_{(i_0,n)} + f_{(i_1,n)} + f_{(i_2,n)} = f_{(n, j_0)} + f_{(n, j_1)}$};
        \end{tikzpicture}
        \label{fig:split}
    }
    \hspace{0.5cm}
    \subcaptionbox{\label{fig:pick}\textsc{pick node}}[0.3\textwidth]{%
        \begin{tikzpicture}[baseline, node distance=1.5cm and 2cm, auto, scale=0.9, every node/.style={transform shape}]
            % Define styles
            \tikzset{
                pickstyle/.style={circle, draw=black, fill=orange!20, text centered, minimum height=3em, minimum width=3em},
                arrowstyle/.style={-{Latex[length=1.5mm]}, draw=black},
            }
            % Pick node
            \node[pickstyle] (Pick) {Pick};
            % Incoming node
            \node[above=0.5cm of Pick] (In) {$f_{(i_0,n)}$};
            % Outgoing nodes
            \node[below left of=Pick] (Out1) {$f_{(n,j_0)}$};
            \node[below right of=Pick] (Out2) {$f_{(n,j_1)}$};
            % Arrows
            \draw[arrowstyle] (In) -- (Pick);
            \draw[arrowstyle] (Pick) -- (Out1);
            \draw[arrowstyle] (Pick) -- (Out2);
            % \node at ($(Pick) - (0,1.3)$) [below] {$(f_{(i,n)} = f_{(n,j_0)} \land f_{(n,j_1)} = 0) \; \text{or} \; (f_{(i,n)} = f_{(n,j_1)} \land f_{(n,j_0)} = 0)$};
            \node at ($(Pick) - (0,1.3)$) [below] {$f_{(i_0,n)} = f_{(n,j_{k})} \land f_{(n,j_{1-k})} = 0 ~~ k \in \{0, 1\} $};
        \end{tikzpicture}
        \label{fig:pick}
    }
    \hspace{0.5cm}
\subcaptionbox{\label{fig:mult}\textsc{multiply node}}[0.3\textwidth]{%
        \begin{tikzpicture}[baseline, node distance=1.5cm and 2cm, auto, scale=0.9, every node/.style={transform shape}]
            % Define styles
            \tikzset{
                multstyle/.style={circle, draw=black, fill=gray!25, text centered, minimum height=3em, minimum width=3em},
                arrowstyle/.style={-{Latex[length=1.5mm]}, draw=black},
            }
            % Multiply node
            \node[multstyle] (Mult) {$\times C$};
            % Incoming node
            \node[above=0.5cm of Mult] (In) {$f_{(i,n)}$};
            % Outgoing node
            \node[below of=Mult] (Out) {$f_{(n,j)} = C \cdot f_{(i,n)}$};
            % Arrows
            \draw[arrowstyle] (In) -- (Mult);
            \draw[arrowstyle] (Mult) -- (Out);
        \end{tikzpicture}
        \label{fig:mult}
    }
    \hspace{0.5cm}
    \subcaptionbox{\label{fig:all_equal}\textsc{all equal node}}[0.3\textwidth]{%
        \begin{tikzpicture}[baseline, node distance=1.5cm and 2cm, auto, scale=0.9, every node/.style={transform shape}]
            % Define styles
            \tikzset{
                allEqstyle/.style={circle, draw=black, fill=purple!20, text centered, minimum height=3em, minimum width=3em},
                arrowstyle/.style={-{Latex[length=1.5mm]}, draw=black},
            }
            % All Equal node
            \node[allEqstyle] (AllEq) {AllEq};
            % Incoming nodes
            \node[above left of=AllEq] (In1) {$f_{(i_0,n)}$};
            \node[above right of=AllEq] (In2) {$f_{(i_2,n)}$};
            \node[above=0.5cm of AllEq] (In3) {$f_{(i_1,n)}$};
            % Outgoing nodes
            \node[below left of=AllEq] (Out1) {$f_{(n,j_0)}$};
            \node[below right of=AllEq] (Out2) {$f_{(n,j_1)}$};
            % Arrows
            \draw[arrowstyle] (In1) -- (AllEq);
            \draw[arrowstyle] (In2) -- (AllEq);
            \draw[arrowstyle] (In3) -- (AllEq);
            \draw[arrowstyle] (AllEq) -- (Out1);
            \draw[arrowstyle] (AllEq) -- (Out2);
            \node at ($(AllEq) - (0,1.2)$) [below] {$f_{(n,*)} = f_{(*,n)}$};
        \end{tikzpicture}
        \label{fig:all_equal}
    }
    \hspace{0.5cm}
    \subcaptionbox{\label{fig:copy} \textsc{copy node}}
    [0.3\textwidth]{%
        \begin{tikzpicture}[baseline, node distance=1.5cm and 2cm, auto, scale=0.9, every node/.style={transform shape}]
            % Define styles
            \tikzset{
                copystyle/.style={circle, draw=black, fill=yellow!20, text centered, minimum height=3em, minimum width=3em},
                arrowstyle/.style={-{Latex[length=1.5mm]}, draw=black},
            }
            % Copy node
            \node[copystyle] (Copy) {Copy};
            % Incoming nodes
            \node[above left of=Copy] (In1) {$f_{(i_0,n)}$};
            \node[above right of=Copy] (In2) {$f_{(i_1,n)}$};
            % Outgoing nodes
            \node[below left of=Copy] (Out1) {$f_{(n,j_0)}$};
            \node[below right of=Copy] (Out2) {$f_{(n,j_2)}$};
            \node[below=0.4cm of Copy] (Out3) {$f_{(n,j_1)}$};
            % Arrows
            \draw[arrowstyle] (In1) -- (Copy);
            \draw[arrowstyle] (In2) -- (Copy);
            \draw[arrowstyle] (Copy) -- (Out1);
            \draw[arrowstyle] (Copy) -- (Out2);
            \draw[arrowstyle] (Copy) -- (Out3);
            % Flow conservation label
            \node at ($(Copy) - (0,1.6)$) [below] {$f_{(n,*)} = f_{(i_0,n)} + f_{(i_1,n)}$};
        \end{tikzpicture}
        \label{fig:copy}
    } \hspace{0.5cm}
    \subcaptionbox{\label{fig:sink}\textsc{Sink node}}[0.3\textwidth]{%
    \begin{tikzpicture}[baseline, node distance=1.5cm and 2cm, auto, scale=1, every node/.style={transform shape}]
        % Define styles
        \tikzset{
            sinkstyle/.style={circle, draw=black, fill=forestgreen!20, text centered, minimum height=3em, minimum width=3em},
            arrowstyle/.style={-{Latex[length=1.5mm]}, draw=black},
        }
    
        % Sink node
        \node[sinkstyle] (Sink) {Sink};
    
        % Incoming nodes
        \node[above left of=Sink] (In1) {$f_{(i_0,n)}$};
        \node[above of=Sink] (In2) {$f_{(i_1,n)}$};
        \node[above right of=Sink] (In3) {$f_{(i_2,n)}$};
    
        % Arrows (only incoming edges)
        \draw[arrowstyle] (In1) -- (Sink);
        \draw[arrowstyle] (In2) -- (Sink);
        \draw[arrowstyle] (In3) -- (Sink);
    
        % Objective label for optimization
        \node at ($(Sink) - (0,0.8)$) [below] {Objective: $\sum_{i} f_{(i,n)}$};
    
    \end{tikzpicture}
    \label{fig:sink}
    }
    \caption{Different node types in \sysname's DSL.}
    \label{fig:combined_nodes}
\end{figure*}

\newpage
\appendix
\noindent

% \section{$\mathcal{X}$Plain can model any linear optimization}
\section{Formalizing $\mathcal{X}$Plain's DSL}
\label{sec:proof}

We prove that we can model any linear optimization in $\mathcal{X}$Plain.

\subsection{$\mathcal{X}$Plain's node description}
\parab{\textsc{Preliminaries}}. Our network-flow-based DSL is a directed graph where we denote the set of nodes with $\mathcal{N}$ and the set of directed edges as $\mathcal{E}$. We treat each edge $(i, j) \in \mathcal{E}$ as a variable with a non-negative flow value $f_{(i,j)} \geq 0$. We impose constraints on these flow variables as needed. We define incoming edges to node $n \in \mathcal{N}$ as those edges which are directed towards $n$ (i.e., $(i,n) \in \mathcal{E}$). Outgoing edges are those exiting $n$. The incoming (outgoing) traffic to a node is the sum of all flow that arrives at that node from all the incoming (outgoing) edges.

\noindent We have the following node behaviors:

\vspace{.5mm}
\parab{\textsc{split nodes} ($\mathcal{N}_{split}$)} split the incoming traffic between the outgoing edges (\autoref{fig:split}). They enforce the traditional flow conservation constraints: 

{\small
\begin{align*}
    & \sum_{\{i \in \mathcal{N}, (i,n) \in \mathcal{E}\}} f_{(i,n)} = \sum_{\{i \in \mathcal{N}, (n,i) \in \mathcal{E}\}} f_{(n,i)} && \forall n \in \mathcal{N}_{split} \\
\end{align*}
}

% \begin{figure}[ht]
%     \centering
% \begin{tikzpicture}[node distance=1.5cm and 2cm, auto, scale=1, every node/.style={transform shape}]
%     % Define styles
%     \tikzset{
%         splitstyle/.style={circle, draw=black, fill=blue!10, text centered, minimum height=3em, minimum width=3em},
%         arrowstyle/.style={-{Latex[length=1.5mm]}, draw=black},
%     }

%     % Split node
%     \node[splitstyle] (Split) {Split};

%     % Incoming nodes
%     \node[above left of=Split] (In1) {$f_{(i_0,n)}$};
%     \node[above right of=Split] (In2) {$f_{(i_2,n)}$};
%     \node[above=0.5cm of Split] (In3) {$f_{(i_1,n)}$};

%     % Outgoing nodes
%     \node[below left of=Split] (Out1) {$f_{(n,j_0)}$};
%     \node[below right of=Split] (Out2) {$f_{(n,j_1)}$};

%     % Arrows (incoming edges from the top)
%     \draw[arrowstyle] (In1) -- (Split);
%     \draw[arrowstyle] (In2) -- (Split);
%     \draw[arrowstyle] (In3) -- (Split);

%     % Arrows (outgoing edges to the bottom)
%     \draw[arrowstyle] (Split) -- (Out1);
%     \draw[arrowstyle] (Split) -- (Out2);
%     \node at ($(Split) - (0,1.3)$) [below] {$f_{(i_0,n)} + f_{(i_1,n)} + f_{(i_2,n)} = f_{(n, j_0)} + f_{(n, j_1)}$};
% \end{tikzpicture}
% \caption{\textsc{split node}}
%     \label{fig:split}
% \end{figure}
They can also optionally enforce (1) an upper bound on the traffic on an outgoing edge (capacity constraint) and (2) the traffic on an incoming edge to be constant. 
{\small
\begin{align*}
    & f_{(n, i)} \leq C_{(n, i)} \quad C_{(n, i)} \in \mathbb{R}^{+}, \forall i \in \{i \in \mathcal{N}, (n, i) \in \mathcal{E}\} && \forall n \in \mathcal{N}_{split} \\
    & f_{(i, n)} = d_{(i, n)} \quad d_{(i, n)} \in \mathbb{R}_{\geq 0}, \forall i \in \{i \in \mathcal{N}, (i, n) \in \mathcal{E}\} && \forall n \in \mathcal{N}_{split}
\end{align*}
}

\vspace{.5mm}
\parab{\textsc{pick nodes} ($\mathcal{N}_{pick}$)} satisfy flow conservation \emph{but} only allow one of the outgoing edges to carry traffic (\autoref{fig:pick}):

{\small
	\begin{align*}
	& \sum_{\{i \in \mathcal{N}, (i,n) \in \mathcal{E}\}} f_{(i,n)} = \sum_{\{i \in \mathcal{N}, (n,i) \in \mathcal{E}\}} f_{(n,i)} && \forall n \in \mathcal{N}_{pick} \\
	& \sum_{\{i \in \mathcal{N}, (n,i) \in \mathcal{E}\}}  \mathds{1}[f_{(n, i)} > 0] = 1 && \forall n \in \mathcal{N}_{pick}
	\end{align*}
}
where $\mathds{1}[x > 0]$ is an indicator function (=$1$ if $x > 0$, otherwise = 0). 

% \begin{figure}
%     \centering
% \begin{tikzpicture}[node distance=1.5cm and 2cm, auto, scale=1, every node/.style={transform shape}]
%     % Define styles
%     \tikzset{
%         pickstyle/.style={circle, draw=black, fill=orange!20, text centered, minimum height=3em, minimum width=3em},
%         arrowstyle/.style={-{Latex[length=1.5mm]}, draw=black},
%     }

%     % Pick node
%     \node[pickstyle] (Pick) {Pick};

%     % Incoming node
%     \node[above of=Pick] (In) {$f_{(i,n)}$};

%     % Outgoing nodes
%     \node[below left of=Pick] (Out1) {$f_{(n,j_0)}$};
%     \node[below right of=Pick] (Out2) {$f_{(n,j_1)}$};

%     % Arrows (one incoming edge and outgoing edges)
%     \draw[arrowstyle] (In) -- (Pick);
%     \draw[arrowstyle] (Pick) -- (Out1);
%     \draw[arrowstyle] (Pick) -- (Out2);
%     \node at ($(Pick) - (0,1.3)$) [below] {$(f_{(i,n)} = f_{(n,j_0)} \land f_{(n,j_1)} = 0) \; \text{or} \; (f_{(i,n)} = f_{(n,j_1)} \land f_{(n,j_0)} = 0)$};
% \end{tikzpicture}
% \caption{\textsc{pick node}}

%     \label{fig:pick}
% \end{figure}

\vspace{.5mm}
\parab{\textsc{multiply nodes} ($\mathcal{N}_{mult}$)} only have one incoming and one outgoing link. They multiply the incoming traffic by a constant $C \in \mathbb{R}^{+}$ before sending it out (\autoref{fig:mult}). They only satisfy flow conservation when $C=1$.

{\small
	\begin{align*}
		f_{(n,i)} = C f_{(j,n)} && \forall (i,j) \in \{(i, j) \mid i, j \in \mathcal{N}, (n, i), (j,n) \in \mathcal{E}\}~\forall n \in \mathcal{N}_{mult}
	\end{align*}
}
% \begin{figure}[h]
%     \centering
%     \begin{tikzpicture}[node distance=1.5cm and 2cm, auto, scale=1, every node/.style={transform shape}]
%         % Define styles
%         \tikzset{
%             multstyle/.style={circle, draw=black, fill=gray!25, text centered, minimum height=3em, minimum width=3em},
%             arrowstyle/.style={-{Latex[length=1.5mm]}, draw=black},
%         }
    
%         % Multiply node
%         \node[multstyle] (Mult) {$\times C$};
    
%         % Incoming node
%         \node[above of=Mult] (In) {$f_{(i,n)}$};
    
%         % Outgoing node
%         \node[below of=Mult] (Out) {$f_{(n,j)} = C \cdot f_{(i,n)}$};
    
%         % Arrows (one incoming and one outgoing)
%         \draw[arrowstyle] (In) -- (Mult);
%         \draw[arrowstyle] (Mult) -- (Out);
    
%     \end{tikzpicture}
%     \caption{\textsc{multiply node}}
%     \label{fig:mult}
% \end{figure}

\vspace{.5mm}
\parab{\textsc{all equal nodes} ($\mathcal{N}_{allEq}$)} require all the incoming and outgoing edges to carry the same amount of traffic (\autoref{fig:all_equal}):

{\small
	\begin{align*}
		f_{(n,i)} = f_{(j,n)} && \forall (i,j) \in \{(i, j) \mid i, j \in \mathcal{N}, (n, i), (j,n) \in \mathcal{E}\}~\forall n \in \mathcal{N}_{allEq}
	\end{align*}
}

% \begin{figure}[h]
%     \centering
% \begin{tikzpicture}[node distance=1.5cm and 2cm, auto, scale=1, every node/.style={transform shape}]
%     % Define styles
%     \tikzset{
%         allEqstyle/.style={circle, draw=black, fill=purple!20, text centered, minimum height=3em, minimum width=3em},
%         arrowstyle/.style={-{Latex[length=1.5mm]}, draw=black},
%     }

%     % All Equal node
%     \node[allEqstyle] (AllEq) {AllEq};

%     % Incoming nodes
%     \node[above left of=AllEq] (In1) {$f_{(i_0,n)}$};
%     \node[above right of=AllEq] (In2) {$f_{(i_2,n)}$};
%     \node[above=0.5cm of AllEq] (In3) {$f_{(i_1,n)}$};
    
%     % Outgoing nodes
%     \node[below left of=AllEq] (Out1) {$f_{(n,j_0)}$};
%     \node[below right of=AllEq] (Out2) {$f_{(n,j_1)}$};

%     % Arrows (incoming and outgoing, all equal)
%     \draw[arrowstyle] (In1) -- (AllEq);
%     \draw[arrowstyle] (In2) -- (AllEq);
%     \draw[arrowstyle] (In3) -- (AllEq);
%     \draw[arrowstyle] (AllEq) -- (Out1);
%     \draw[arrowstyle] (AllEq) -- (Out2);

%     % Equality constraints as labels
%     \node at ($(AllEq) - (0,1.2)$) [below] {$f_{(n,*)} = f_{(*,n)}$};

% \end{tikzpicture}
%     \caption{\textsc{all equal node}}
%     \label{fig:all_equal}
% \end{figure}

\vspace{.5mm}

To make it simpler to encode a heuristic in the DSL, we also add the following node types to our DSL:

\vspace{.5mm}
\parab{\textsc{copy nodes} ($\mathcal{N}_{copy}$)} copy the total incoming flow into each outgoing edge (\autoref{fig:copy}):

{\small
	\begin{align*}
	f_{(n,j)} = \sum_{\{i \in \mathcal{N}, (i,n) \in \mathcal{E}\}} f_{(i,n)} && \forall j \in \{j \mid j \in \mathcal{N}, (n, j) \in \mathcal{E}\}~\forall n \in \mathcal{N}_{copy} \\
	\end{align*}
}

We can recreate this node's behavior if we combine split nodes and equal nodes (\autoref{fig:split_equal_copy}). However, using a copy node directly is more intuitive and straightforward for users, and we include it in our DSL for that reason. 

% \begin{figure}[h]
%     \centering
%     \begin{tikzpicture}[node distance=1.5cm and 2cm, auto, scale=1, every node/.style={transform shape}]
%         % Define styles
%         \tikzset{
%             copystyle/.style={circle, draw=black, fill=yellow!20, text centered, minimum height=3em, minimum width=3em},
%             arrowstyle/.style={-{Latex[length=1.5mm]}, draw=black},
%         }
    
%         % Copy node
%         \node[copystyle] (Copy) {Copy};
    
%         % Incoming nodes
%         \node[above left of=Copy] (In1) {$f_{(i_0,n)}$};
%         \node[above right of=Copy] (In2) {$f_{(i_1,n)}$};
        
%         % Outgoing nodes
%         \node[below left of=Copy] (Out1) {$f_{(n,j_0)}$};
%         \node[below right of=Copy] (Out2) {$f_{(n,j_2)}$};
%         \node[below=0.4cm of Copy] (Out3) {$f_{(n,j_1)}$};
    
%         % Arrows (incoming and outgoing, copy total flow to each outgoing edge)
%         \draw[arrowstyle] (In1) -- (Copy);
%         \draw[arrowstyle] (In2) -- (Copy);
%         \draw[arrowstyle] (Copy) -- (Out1);
%         \draw[arrowstyle] (Copy) -- (Out2);
%         \draw[arrowstyle] (Copy) -- (Out3);
    
%         % Flow conservation label
%         \node at ($(Copy) - (0,1.6)$) [below] {$f_{(n,*)} = f_{(i_0,n)} + f_{(i_1,n)}$};
    
%     \end{tikzpicture}
%     \caption{\textsc{copy node}}
%     \label{fig:copy}
% \end{figure}

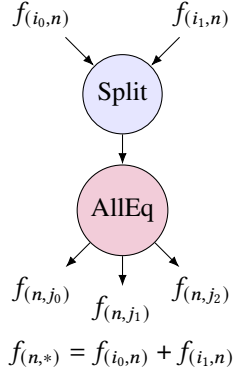
\begin{figure}[ht]
    \centering
\begin{tikzpicture}[node distance=1.5cm and 2cm, auto, scale=1, every node/.style={transform shape}]
    % Define styles
    \tikzset{
        splitstyle/.style={circle, draw=black, fill=blue!10, text centered, minimum height=3em, minimum width=3em},
        arrowstyle/.style={-{Latex[length=1.5mm]}, draw=black},
        allEqstyle/.style={circle, draw=black, fill=purple!20, text centered, minimum height=3em, minimum width=3em},
    }

    \node[splitstyle] (Split) {Split};
    % All Equal node
    \node[allEqstyle, below=0.4cm of Split] (AllEq) {AllEq};
    % Incoming nodes
    \node[above left of=Split] (In1) {$f_{(i_0,n)}$};
    \node[above right of=Split] (In2) {$f_{(i_1,n)}$};

    % Arrows (incoming edges from the top)
    \draw[arrowstyle] (In1) -- (Split);
    \draw[arrowstyle] (In2) -- (Split);

    % Outgoing nodes
    \node[below left of=AllEq] (Out1) {$f_{(n,j_0)}$};
    \node[below right of=AllEq] (Out2) {$f_{(n,j_2)}$};
    \node[below=0.4cm of AllEq] (Out3) {$f_{(n,j_1)}$};

    \draw[arrowstyle] (AllEq) -- (Out1);
    \draw[arrowstyle] (AllEq) -- (Out2);
    \draw[arrowstyle] (AllEq) -- (Out3);
    
    % Arrows (outgoing edges to the bottom)
    \draw[arrowstyle] (Split) -- (AllEq);

    \node at ($(AllEq) - (0,1.6)$) [below] {$f_{(n,*)} = f_{(i_0,n)} + f_{(i_1,n)}$};
\end{tikzpicture}
\caption{Recreating \textsc{copy node} with \textsc{split node} and \textsc{all equal node}}
    \label{fig:split_equal_copy}
\end{figure}

We use source and sink nodes to define the objective:

\vspace{.5mm}
\parab{\textsc{source nodes} ($\mathcal{N}_{source}$)} are special cases of split or pick nodes that represent the inputs to the problem. For example, \autoref{fig:dsl_DP} illustrates the input traffic demand modeled as source nodes that enforce split node behavior (\sourcesplit). Also, \autoref{fig:dsl_vbp} shows the input ball sizes as source nodes with pick node behavior (\sourcepick, each ball can only be placed in one bin).

\vspace{.5mm}
\parab{\textsc{sink node} ($\mathcal{N}_{sink}$)} is a specific node that (1) only has incoming edges and (2) measures the performance of the problem as the total incoming traffic through these edges (\autoref{fig:sink}). When the DSL represents an optimization problem, the sink node is designated as the objective, and the compiler translates the value of the sink node into the optimization objective.

% \begin{figure}[h]
%     \centering
%     \begin{tikzpicture}[node distance=1.5cm and 2cm, auto, scale=1, every node/.style={transform shape}]
%         % Define styles
%         \tikzset{
%             sinkstyle/.style={circle, draw=black, fill=forestgreen!20, text centered, minimum height=3em, minimum width=3em},
%             arrowstyle/.style={-{Latex[length=1.5mm]}, draw=black},
%         }
    
%         % Sink node
%         \node[sinkstyle] (Sink) {Sink};
    
%         % Incoming nodes
%         \node[above left of=Sink] (In1) {$f_{(i_0,n)}$};
%         \node[above of=Sink] (In2) {$f_{(i_1,n)}$};
%         \node[above right of=Sink] (In3) {$f_{(i_2,n)}$};
    
%         % Arrows (only incoming edges)
%         \draw[arrowstyle] (In1) -- (Sink);
%         \draw[arrowstyle] (In2) -- (Sink);
%         \draw[arrowstyle] (In3) -- (Sink);
    
%         % Objective label for optimization
%         \node at ($(Sink) - (0,1.2)$) [below] {Objective: $\sum_{i} f_{(i,n)}$};
    
%     \end{tikzpicture}
%     \caption{\textsc{Sink node}}
%     \label{fig:sink}
% \end{figure}

\subsection{$\mathcal{X}$Plain can model any linear optimization}
\vspace{.5mm}
\begin{theorem}
    We can model any linear optimization (linear programming or mixed integer linear programming) as a flow network using the six node behaviors ($\mathcal{N}_{split}$, $\mathcal{N}_{pick}$, $\mathcal{N}_{mult}$, $\mathcal{N}_{allEq}$, and $\mathcal{N}_{sink}$)
\end{theorem}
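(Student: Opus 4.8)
The plan is to give a constructive reduction: put an arbitrary LP/MILP into a canonical form and then build, gadget by gadget, a flow network whose feasible flows are in bijection with the feasible solutions of the program and whose sink value equals, up to an additive constant, the objective. First I would reduce to the standard form $\max\, c^\top x$ subject to $Ax \le b$, $A'x = b'$, $x \ge 0$, with a distinguished set of integer variables; free variables are handled by the textbook substitution $x_j = x_j^+ - x_j^-$ with $x_j^+, x_j^- \ge 0$, which matches the fact that every edge carries a non-negative flow $f_{(i,j)} \ge 0$.

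\textbf{Variables.} Each non-negative variable $x_j$ is a distinguished edge. Since $x_j$ may occur in several constraints and in the objective, I would feed it through an $\mathcal{N}_{copy}$ node to produce as many identical copies as occurrences, and through $\mathcal{N}_{mult}$ nodes to scale a copy by the magnitude $|A_{ij}|$ of the coefficient with which it appears. Integrality comes from $\mathcal{N}_{pick}$: an $\mathcal{N}_{source}$ node emitting the constant $1$ into a pick node with two outgoing edges forces exactly one edge to carry the unit of flow, so the flow on the designated edge is a binary variable $z \in \{0,1\}$; a bounded integer variable is then an $\mathcal{N}_{mult}$-weighted binary expansion $\sum_k 2^k z_k$ summed at an $\mathcal{N}_{split}$ node.

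\textbf{Constraints.} The crux is encoding a linear relation with arbitrary-sign coefficients inside a non-negative conservation framework. For an equality $\sum_j A_{ij} x_j = b_i$ (WLOG $b_i \ge 0$), partition the indices into $P = \{j : A_{ij} > 0\}$ and $N = \{j : A_{ij} < 0\}$ and rewrite it as $\sum_{j \in P} A_{ij} x_j = b_i + \sum_{j \in N} |A_{ij}| x_j$. I would build one $\mathcal{N}_{split}$ node $L_i$ whose incoming edges are the scaled copies $A_{ij}x_j$ for $j \in P$, and a second $\mathcal{N}_{split}$ node $R_i$ whose incoming edges are the scaled copies $|A_{ij}| x_j$ for $j \in N$ together with one incoming edge of constant value $b_i$ (a split-node demand). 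By flow conservation the single outgoing edge of $L_i$ carries the left-hand sum and that of $R_i$ the right-hand sum, and routing both into an $\mathcal{N}_{allEq}$ node forces them equal, which is exactly the constraint. An inequality $\le$ becomes such an equality after adding a slack edge (from an uncapacitated source) to the smaller side, and an upper bound on a single variable is the split node's native capacity option.

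\textbf{Objective and main obstacle.} The objective is carried by the single $\mathcal{N}_{sink}$ node, which sums its non-negative incoming flows. To realize $\sum_j c_j x_j$ with mixed-sign $c_j$ I would introduce an auxiliary variable $z$ fixed by the equality gadget to $z = \sum_j c_j x_j + M$, where $M$ is a constant chosen so that $z \ge 0$ on the feasible region (finite whenever the objective is bounded there, as in the bounded-domain settings these analyzers target), and let the sink measure $z$; since an additive constant preserves the optimizer, optimizing the sink optimizes the original objective. The main obstacle I anticipate is precisely this sign handling: because flows are non-negative and split nodes only equate sums, negative coefficients and a sign-indefinite objective cannot be expressed directly, so the correctness of the two-node-plus-$\mathcal{N}_{allEq}$ gluing and of the offset reduction, rather than the routine variable and integrality gadgets, is where the argument must be made careful. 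I would close by checking the bijection in both directions: every feasible program solution induces a feasible flow with matching value, and conversely every feasible flow projects, via the designated variable edges, to a feasible solution, so the two optima coincide.
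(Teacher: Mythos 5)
Your proposal is correct and follows essentially the same route as the paper's proof: decompose each coefficient matrix by sign, introduce one scaled copy of each variable per constraint occurrence via multiply and copy/all-equal nodes, realize each constraint row as a split-node conservation equation with a non-negative slack edge, obtain binary variables from a unit-rate source feeding a pick node, and funnel a reformulated objective variable into the sink. The only differences are cosmetic gadget choices --- you equate two split nodes through an $\mathcal{N}_{allEq}$ node where the paper places the negative-coefficient terms on the outgoing side of a single split node, and your explicit additive offset $M$ for a sign-indefinite objective is a detail the paper's treatment glosses over.
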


\begin{proof} An optimization problem maximizes (or minimizes) an objective subject to inputs that fall within a feasible space that the optimization constraints characterize.  We can express a linear optimization problem as (linear programming or mixed integer linear programming):

{
\small
    \begin{align*}
        \max_{{\bf x}, {\bf y}}~~~{\bf c}_{\bf x}^\intercal {\bf x} + {\bf c}_{\bf y}^\intercal {\bf y} \\
        {\bf A}_{\bf x} {\bf x} + {\bf A}_{\bf y} {\bf y} \leq {\bf b} \\
        {\bf x} \geq {\bf 0} \\
        {\bf y} \in \{0, 1\}^{|{\bf y}|} 
    \end{align*}
}

To show that our DSL is complete, we need to show that we can capture both the feasible space and the objective correctly through our flow model for every possible linear optimization. 

We first present a general algorithm to express the feasible space of any given linear optimization as a flow model and prove it is correct. Next, we show how we can use the same algorithm to express any linear objective.

\vspace{1.5mm}
\noindent {\bf How to represent the feasible space with a flow model.} We can express the feasible space of any linear optimization as:
{
\small
    \begin{align}
        % & \max_{{\bf x}, {\bf y}} && {\bf c}_{\bf x}^\intercal {\bf x} + {\bf c}_{\bf y}^\intercal {\bf y} \\
        {\bf A}_{\bf x} {\bf x} + {\bf A}_{\bf y} {\bf y} \leq {\bf b} \label{eqConstraint} \\
        {\bf x} \geq {\bf 0} \\
        {\bf y} \in \{0, 1\}^{|{\bf y}|} 
    \end{align}
}
where we denote matrices and vectors in bold. {\bf x} and {\bf y} are vectors of continuous and binary variables of size $|\mathbf{x}| \times 1$ and $|\mathbf{y}| \times 1$ , respectively. ${\bf b}$ is a constant vector of size $|\mathbf{b}| \times 1$ . ${\bf A}_{\bf x}$ and ${\bf A}_{\bf y}$ are constant matrices of sizes $\bf |b| \times \bf |x|$ and $\bf |b| \times \bf |y|$ respectively. Note that we can enforce an equality constraint as two inequality constraints (\autoref{eqConstraint}), and represent any integer variable as the sum of multiple binary variables. We map the variables to flows in our model.

We need to transform the above optimization before we can model it with our node behaviors:

\vspace{2mm}
\noindent $\blacktriangleright$ {\it Transformation 1.} The matrices ${\bf A}_{\bf x}$ and ${\bf A}_{y}$, and the vector ${\bf b}$ may contain negative entries. This conflicts with the non-negativity requirement of the flows in our flow model. To address this, we decompose these matrices and vector into their positive and negative components:
{
\small
    \begin{align*}
        {\bf A}_{\bf x} = {\bf A}^{+}_{\bf x} - {\bf A}^{-}_{\bf x},~~~ 
        {\bf A}_{\bf y} = {\bf A}^{+}_{\bf y} - {\bf A}^{-}_{\bf y},~~~
        {\bf b} = {\bf b}^{+} - {\bf b}^{-}
    \end{align*}
}
where all the elements in $\mathbf{A^{+}_x} = [a^{(+, \mathbf{x})}_{ij}]$ and $\mathbf{A^{-}_x} = [a^{(-, \mathbf{x})}_{ij}]$ are non-negative such that \textbf{at most one} of $a^{(+, \mathbf{x})}_{ij}$ or $a^{(-, \mathbf{x})}_{ij}$ is non-zero for every $i \in \mathbb{Z}_{[0, |\mathbf{b}|)}$ and $j \in \mathbb{Z}_{[0, |\mathbf{x}|)}$. Note that $\mathbb{Z}_{[0, m)} = \{0, \dots, m-1\}$. Same holds for both (1) $\bf A^{+}_y$ and $\bf A^{-}_y$, and (2) $\mathbf{b^{+}} = [b^+_i]$ and $\mathbf{b^{-}} = [b^-_i]$ over every $i$. All matrices have the same size as their originating matrix. After substituting these decompositions into~\autoref{eqConstraint}, we have:

{
\small
    \begin{align}
    {\bf A}^{+}_{\bf x} {\bf x} + {\bf A}^{+}_{\bf y} {\bf y} + {\bf b}^{-} \leq  {\bf A}^{-}_{\bf x} {\bf x} + {\bf A}^{-}_{\bf y}{\bf y} + {\bf b}^{+} \label{eqConstraintV2}
    \end{align}
}

\vspace{2mm}
\noindent $\blacktriangleright$ {\it Transformation 2.} \autoref{eqConstraintV2} and \textsc{split node}s qualitatively represent similar behaviors. \textsc{split node}s split the incoming traffic across outgoing edges and ensure the traffic on each edge does not exceed the capacity constraints. Ideally, we can enforce the \autoref{eqConstraintV2} constraints using a \textsc{split node}s and as a flow conservation a constraint:
{
\small
    \begin{align}
    & {\bf A}^{+}_{\bf x} {\bf x} + {\bf A}^{+}_{\bf y} {\bf y} + {\bf b}^{-} + {\bf f} =  {\bf A}^{-}_{\bf x} {\bf x} + {\bf A}^{-}_{\bf y}{\bf y} + {\bf b}^{+}  && \text{(Flow conservation)} \noindent \nonumber \\ 
    & {\bf 0} \leq {\bf f} && \text{(Flow constraint)} \label{eqConstraintV4}
    \end{align}
}

\vspace{0.2mm}
The problem is that \autoref{eqConstraintV2} also involves coefficients associated with each variable (${\bf A}$), while \textsc{split node}s do not accept weights. We address this by replacing each term (coefficient multiplied by a variable) in each of the \autoref{eqConstraintV2} constraints with an auxiliary variable:
\vspace{1mm}
% {
% \small
%     \begin{align}
%         & u^{+}_{ij} = a^{(+, \bf x)}_{ij} x_j, \; u^{-}_{ij} = 0 & \text{if}~~a^{(+, \bf x)}_{ij} \geq 0 &&  \forall{i} \in \mathbb{Z}_{[0, |\mathbf{b}|)}, \forall{j} \in \mathbb{Z}_{[0, |\mathbf{x}|)} \nonumber \\
%         & u^{-}_{ij} = a^{(-, \bf x)}_{ij} x_j, \; u^{+}_{ij} = 0 & \text{if}~~a^{(-, \bf x)}_{ij} > 0 && \forall{i} \in \mathbb{Z}_{[0, |\mathbf{b}|)}, \forall{j} \in \mathbb{Z}_{[0, |\mathbf{x}|)} \nonumber \\
%         & v^{+}_{ij} = a^{(+, \bf y)}_{ij} y_j, \; v^{-}_{ij} = 0 & \text{if}~~a^{(+, \bf y)}_{ij} \geq 0 && \forall{i} \in \mathbb{Z}_{[0, |\mathbf{b}|)}, \forall{j} \in \mathbb{Z}_{[0, |\mathbf{y}|)}\nonumber \\ 
%         & v^{-}_{ij} = a^{(-, \bf y)}_{ij} y_j, \; v^{+}_{ij} = 0  & \text{if}~~a^{(-, \bf y)}_{ij} > 0 && \forall{(i,j)} \in \mathbb{Z}_{[0, |\mathbf{b}|)} \times \mathbb{Z}_{[0, |\mathbf{y}|)}\label{eq:AuxVarY}
%     \end{align}
% }

{
\small
    \begin{align}
        & u^{+}_{ij} = a^{(+, \bf x)}_{ij} x_j, \; u^{-}_{ij} = 0 \quad \text{if}~a^{(+, \bf x)}_{ij} \geq 0 \quad  \forall{i} \in \mathbb{Z}_{[0, |\mathbf{b}|)}, \forall{j} \in \mathbb{Z}_{[0, |\mathbf{x}|)} \nonumber \\
        & u^{-}_{ij} = a^{(-, \bf x)}_{ij} x_j, \; u^{+}_{ij} = 0 \quad \text{if}~a^{(-, \bf x)}_{ij} > 0 \quad \forall{i} \in \mathbb{Z}_{[0, |\mathbf{b}|)}, \forall{j} \in \mathbb{Z}_{[0, |\mathbf{x}|)} \nonumber \\
        & v^{+}_{ij} = a^{(+, \bf y)}_{ij} y_j, \; v^{-}_{ij} = 0 \quad \text{if}~a^{(+, \bf y)}_{ij} \geq 0 \quad \forall{i} \in \mathbb{Z}_{[0, |\mathbf{b}|)}, \forall{j} \in \mathbb{Z}_{[0, |\mathbf{y}|)} \nonumber \\ 
        & v^{-}_{ij} = a^{(-, \bf y)}_{ij} y_j, \; v^{+}_{ij} = 0  \quad \text{if}~a^{(-, \bf y)}_{ij} > 0 \quad \forall{i} \in \mathbb{Z}_{[0, |\mathbf{b}|)}, \forall{j} \in \mathbb{Z}_{[0, |\mathbf{y}|)} \label{eq:AuxVarY}
    \end{align}
}

\vspace{1.5mm}

% where $\forall {i}~~\forall{j} := \forall{i} \in \mathbb{Z}_{[0, |\mathbf{b}|)} ~ \textit{and} ~ \forall{j} \in \mathbb{Z}_{[0, |\mathbf{x}|)}$.
We define $\mathbf{U}^{+} = [u^{+}_{ij}]$, $\mathbf{U}^{-} = [u^{-}_{ij}]$, $\mathbf{V}^{+} = [v^{+}_{ij}]$, and $\mathbf{V}^{-} = [v^{-}_{ij}]$. We can then express \autoref{eqConstraintV4} in terms of these auxiliary variables:

{
\small
    \begin{align}
        & {\bf U}^{+}{\bf d}_{\bf x} + {\bf V}^{+}{\bf d}_{\bf y} + {\bf b}^{-} + {\bf f} = {\bf U}^{-}{\bf d}_{\bf x} + {\bf V}^{-}{\bf d}_{\bf y} + {\bf b}^{+},~~ 0 \leq {\bf f} \nonumber
    \end{align}
}

where ${\bf d}_{\bf x}$ and ${\bf d}_{\bf y}$ are vectors with all elements equal to 1 and sizes of $|\mathbf{x}| \times 1$ and $|\mathbf{y}| \times 1$ respectively. This is because each of the auxiliary variables $u_{ij}$ or $v_{ij}$ appear in exactly one inequality constraint.

\vspace{2mm}
\noindent $\blacktriangleright$ {\it Transformation 3.} We encounter a problem to enforce the constraints in \autoref{eq:AuxVarY} using \textsc{multiply node} for $u_{ij}$ and $v_{ij}$: each \textsc{multiply node} has only one input and one output edge. Each edge also corresponds to one variable. This means each variable can appear in at most two constraints, corresponding to the two nodes at the two ends of the edge. However, the variables in \autoref{eq:AuxVarY} appear more than twice (for example, $x_j$ can appear up to $|\mathbf{b}|$ times.)

We address this by introducing additional variables and constraints:

{
\small
    \begin{align}
        & u^{+}_{ij} = a^{(+, \bf x)}_{ij} x^+_{ij},~~~u^{-}_{ij} = a^{(-, \bf x)}_{ij} x^-_{ij} & \forall{i} \in \mathbb{Z}_{[0, |\mathbf{b}|)}, \forall{j} \in \mathbb{Z}_{[0, |\mathbf{x}|)}  \nonumber\\
        & v^{+}_{ij} = a^{(+, \bf y)}_{ij} y^+_{ij},~~~v^{-}_{ij} = a^{(-, \bf y)}_{ij} y^-_{ij} & \forall{i} \in \mathbb{Z}_{[0, |\mathbf{b}|)}, \forall{j} \in \mathbb{Z}_{[0, |\mathbf{y}|)}  \nonumber\\
        & x^+_{ij} = x^-_{ij} = x_j & \forall{i} \in \mathbb{Z}_{[0, |\mathbf{b}|)}, \forall{j} \in \mathbb{Z}_{[0, |\mathbf{x}|)} \nonumber \\
        & y^+_{ij} = y^-_{ij} = y_j & \forall{i} \in \mathbb{Z}_{[0, |\mathbf{b}|)}, \forall{j} \in \mathbb{Z}_{[0, |\mathbf{y}|)}  \nonumber
    \end{align}
}
With these modifications, each variable $x^+_{ij}$ and $x^-_{ij}$ appears in exactly two constraints (same for y).
\vspace{0.5mm}

The final resulting optimization after all the transformations is:

{
\small
    \begin{align}
        &&&  {\bf U}^{+}{\bf d}_{\bf x} + {\bf V}^{+}{\bf d}_{\bf y} + {\bf b}^{-} + {\bf f} = {\bf U}^{-}{\bf d}_{\bf x} + {\bf V}^{-}{\bf d}_{\bf y}  + {\bf b}^{+}, ~~ \bf 0 \leq {\bf f}  \label{eq:split} \\
        &&& u^{+}_{ij} = a^{(+, \bf x)}_{ij} x^+_{ij} ~~~~~~~~~~~~~~~~~~~~~~~~~~~~~~~~~~\forall {i}~~\forall{j} \label{eq:mult1}\\
        &&&  x^-_{ij} = \frac{1}{a^{(-, \bf x)}_{ij}} u^{-}_{ij} ~~~~~ \text{if}~~a^{(-, \bf x)}_{ij} > 0 ~~~~~~\forall {i}~~\forall{j} \label{eq:mult2}\\
        &&& v^{+}_{ij} = a^{(+, \bf y)}_{ij} y^+_{ij} ~~~~~~~~~~~~~~~~~~~~~~~~~~~~~~~~~~ \forall {i}~~\forall{j} \label{eq:mult3} \\
        &&& y^-_{ij} = \frac{1}{a^{(-, \bf y)}_{ij} } v^{-}_{ij} ~~~~~ \text{if}~~a^{(-, \bf y)}_{ij} > 0 ~~~~~~\forall {i}~~\forall{j} \label{eq:mult4}\\
        &&& x^+_{ij} = x^-_{ij} = x_j ~~~~~~~~~~~~~~~~~~~~~~~~~~~~~~~~~~~ \forall {i}~~\forall{j} \label{eq:alleq1}\\
        &&& y^+_{ij} = y^-_{ij} = y_j ~~~~~~~~~~~~~~~~~~~~~~~~~~~~~~~~~~~ \forall {i}~~\forall{j} \label{eq:alleq2}\\
        &&& {\bf x} \geq \bf 0 \label{eq:pos}\\
        &&& {\bf y} \in \{0, 1\}^{|\bf y|}  \label{eq:bin}
    \end{align}
}

\noindent where for each of the equations above, notation $\forall {i}~~\forall{j}$ means all the possible $i$ and $j$ values should be considered according to the specific constraints or conditions given for each equation.

\vspace{2mm}
\noindent $\blacktriangleright$ {\it Constructing the flow model.}
We can encode the above constraints using a flow model. We first create one edge per variable and then enforce each constraint using one node: 

\begin{itemize}
    \item[{\bf (S1)}] We encode \autoref{eq:split} using \textsc{split node}s. We will have a node for each possible $i$. The inputs to each node are (1) one edge per variable on the left-hand side of the constraint ($\bf U^{+}$ and $\bf V^{+}$), (2) one edge with a constant rate $\bf b^-$, and (3) one additional edge associated with $\bf f$. The outputs are (1) one edge per variable on the right-hand side of the constraint ($\bf U^{-}$ and $\bf V^{-}$)  and (2) one additional edge with constant rate $\bf b^+$. ~\autoref{fig:step1} shows how this encoding is done. 
    % Note that since vector $\bf b$ is constant, we do not need to introduce new variables when its entries get reused in more than two nodes.

    \begin{figure}[h]
        \centering
    \begin{tikzpicture}[node distance=1.5cm and 2cm, auto, scale=1, every node/.style={transform shape}]
        % Define styles
        \tikzset{
            splitstyle/.style={circle, draw=black, fill=blue!10, text centered, minimum height=3em, minimum width=3em},
            arrowstyle/.style={-{Latex[length=1.5mm]}, draw=black},
        }
    
        % Split node
        \node[splitstyle] (Split) {Split$(i)$};
    
        % Incoming nodes
        \node[above left of=Split] (In1) {$\forall_{j} ~ u^{+}_{ij}$};
        \node[above right of=Split] (In2) {$b^{-}_{i}$};
        \node[above=0.4cm of Split] (In3) {$\forall_{j}  ~ v^{+}_{ij}$};
        \node[above left=-0.4cm and 0.7cm of Split] (In4) {$f_i$};

        % Outgoing nodes
        \node[below left of=Split] (Out1) {$\forall_{j} ~ u^{-}_{ij}$};
        \node[below right of=Split] (Out2) {$b^{+}_{i}$};
        \node[below=0.4cm of Split] (Out3) {$\forall_{j} v^{-}_{ij}$};
    
        % Arrows (incoming edges from the top)
        \draw[arrowstyle] (In1) -- (Split);
        \draw[arrowstyle] (In2) -- (Split);
        \draw[arrowstyle] (In3) -- (Split);
        \draw[arrowstyle] (In4) -- (Split);
        % Arrows (outgoing edges to the bottom)
        \draw[arrowstyle] (Split) -- (Out1);
        \draw[arrowstyle] (Split) -- (Out2);
        \draw[arrowstyle] (Split) -- (Out3);
        % \node at ($(Out2) - (0.5, -0.6)$) [right] {$ \leq b^{+}_{i}$};
        \node at ($(Split) - (0,1.9)$) [below] {$\sum_{j} [u^{+}_{ij} + v^{+}_{ij}] + b^{-}_{i} + f_i = \sum_{j} [u^{-}_{ij} + v^{-}_{ij}] + b^{+}_{i}$};
    \end{tikzpicture}
    \caption{Step 1 of the encoding: \textsc{split node} for $i$. There will be a \textsc{split node} for each possible $i \in \mathbb{Z}_{[0, |\mathbf{b}|)}$. If a variable is 0, we do not need to assign it to the node. There are at most $|\mathbf{x}|$ arrows present for $u^{+}_{ij}$ and $u^{-}_{ij}$ since at most one of $a^{(-, \bf x)}_{ij}$ or $a^{(+, \bf x)}_{ij}$ is non-zero. Similarly, there are at most $|\mathbf{y}|$ arrows present for $y^{+}_{ij}$ and $y^{-}_{ij}$.}
        \label{fig:step1}
    \end{figure}

    \item[{\bf (S2)}] We express \autoref{eq:mult1}~--~\ref{eq:mult4} using \textsc{multiply node}s. The $\bf U^{-}$ edges originate from \textsc{split node}s to these \textsc{multiply node}s while $\bf U^{+}$ edges are in the opposite direction. So, the node that models \autoref{eq:mult1} has $x^+_{ij}$ as its input edge and $u^{+}_{ij}$ as its output edge. Conversely, the input edge is $u^{-}_{ij}$ and the output edge is $x^{-}_{ij}$ for \autoref{eq:mult2} (same holds for $y$ and $v$).~\autoref{fig:step2} shows this step.

    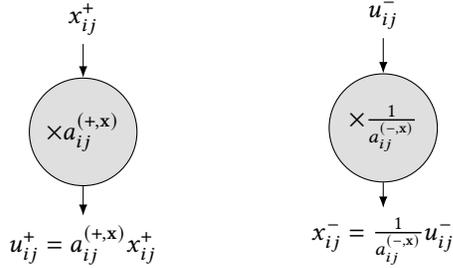
\begin{figure}[h]
        \centering
        \begin{subfigure}[b]{0.4\columnwidth}
            \centering
            \begin{tikzpicture}[node distance=1.5cm and 2cm, auto, scale=1, every node/.style={transform shape}]
                % Define styles
                \tikzset{
                    multstyle/.style={circle, draw=black, fill=gray!25, text centered, minimum height=3em, minimum width=3em},
                    arrowstyle/.style={-{Latex[length=1.5mm]}, draw=black},
                }
            
                % Multiply node
                \node[multstyle] (Mult) {$\times a^{(+, \bf x)}_{ij}$};
            
                % Incoming node
                \node[above of=Mult] (In) {$x^+_{ij}$};
            
                % Outgoing node
                \node[below of=Mult] (Out) {$u^{+}_{ij} = a^{(+, \bf x)}_{ij} x^+_{ij}$};
            
                % Arrows (one incoming and one outgoing)
                \draw[arrowstyle] (In) -- (Mult);
                \draw[arrowstyle] (Mult) -- (Out);
            
            \end{tikzpicture}
            % \caption{First Multiply node}
            \label{fig:step2_mult1}
        \end{subfigure}
        \hspace{0.05\columnwidth} % Add some horizontal space between subfigures
        \begin{subfigure}[b]{0.4\columnwidth}
            \centering
            \begin{tikzpicture}[node distance=1.5cm and 2cm, auto, scale=1, every node/.style={transform shape}]
                % Define styles
                \tikzset{
                    multstyle/.style={circle, draw=black, fill=gray!25, text centered, minimum height=3em, minimum width=3em},
                    arrowstyle/.style={-{Latex[length=1.5mm]}, draw=black},
                }
            
                % Multiply node
                \node[multstyle] (Mult) {$\times \frac{1}{a^{(-, \bf x)}_{ij}}$};
            
                % Incoming node
                \node[above of=Mult] (In) {$u^{-}_{ij}$};
            
                % Outgoing node
                \node[below of=Mult] (Out) {$x^-_{ij} = \frac{1}{a^{(-, \bf x)}_{ij}} u^{-}_{ij}$};
            
                % Arrows (one incoming and one outgoing)
                \draw[arrowstyle] (In) -- (Mult);
                \draw[arrowstyle] (Mult) -- (Out);
            
            \end{tikzpicture}
            % \caption{Second Multiply node}
            \label{fig:step2_mult2}
        \end{subfigure}
        \caption{Step 2 of the encoding. There will be a \textsc{multiply node} for each possible $i$ and $j$. At most of these two \textsc{multiply node}s will be needed since at most one of $a^{(-, \bf x)}_{ij}$ or $a^{(+, \bf x)}_{ij}$ is non-zero.}
        \label{fig:step2}
    \end{figure}

    \item[{\bf (S3)}] We model \autoref{eq:alleq1}~--~\ref{eq:alleq2} using \textsc{all equal node}s.
    Note that for a fixed $i$ and $j$, since at most one of $a^{(-, \bf x)}_{ij}$ and $a^{(+, \bf x)}_{ij}$ is non-zero, at most of the equations in \autoref{eq:mult1} and \autoref{eq:mult2} are needed for that $i$ and $j$ (same holds for \autoref{eq:mult3} and \autoref{eq:mult4}). Consequently, at most of $x^+_{ij}$ and $x^-_{ij}$ is needed in \autoref{eq:alleq1} (same holds for $y^+_{ij}$ and $y^-_{ij}$ in \autoref{eq:alleq2}). The $x_{j}$ and $x^{-}_{ij}$s are input edges and $x^{+}_{ij}$s are the output edges (same for $y$).~\autoref{fig:step3} illustrates this step.

    \begin{figure}[h]
        \centering
        \begin{tikzpicture}[node distance=1.5cm and 2cm, auto, scale=1, every node/.style={transform shape}]
            % Define styles
            \tikzset{
                multstyle/.style={circle, draw=black, fill=purple!20, text centered, minimum height=3em, minimum width=3em},
                arrowstyle/.style={-{Latex[length=1.5mm]}, draw=black},
            }
        
            % Multiply node
            \node[multstyle] (AllEq) {AllEq$(j)$};
        
            % Incoming node
            \node[above left of=AllEq] (In1) {$x_j$};
            \node[above right of=AllEq] (In2) {$\forall_{i}~{x^{-}_{ij}}$};
            % Outgoing node
            \node[below of=AllEq] (Out) {$\forall_{i}~{x^+_{ij}}$};
        
            % Arrows (one incoming and one outgoing)
            \draw[arrowstyle] (In1) -- (AllEq);
            \draw[arrowstyle] (In2) -- (AllEq);
            \draw[arrowstyle] (AllEq) -- (Out);
        
        \end{tikzpicture}
        % \caption{First Multiply node}
    \caption{Step 3 of the encoding. There will be a \textsc{all equal node} for each possible $j \in \mathbb{Z}_{[0, |\mathbf{x}|)}$.}

    \label{fig:step3}
    \end{figure}
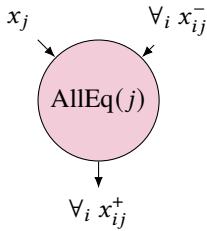

    \item[{\bf (S4)}] The input variables are the variables in $\bf x$ and $\bf y$. We represent binary variables in \autoref{eq:bin} using \textsc{pick nodes}. It has one incoming edge with a constant rate of 1 and two outgoing edges. One of the outputs corresponds to the binary variable. If the node selects that specific edge to carry the flow, the binary variable is 1. Otherwise, it is 0. \autoref{eq:pos} is inherently satisfied as flows are all non-negative.
    
\end{itemize}

    % \begin{figure}
    %     \centering
    % \begin{tikzpicture}[node distance=1.5cm and 2cm, auto, scale=1, every node/.style={transform shape}]
    %     % Define styles
    %     \tikzset{
    %         splitstyle/.style={circle, draw=black, fill=blue!10, text centered, minimum height=3em, minimum width=3em},
    %         arrowstyle/.style={-{Latex[length=1.5mm]}, draw=black},
    %         multstyle/.style={circle, draw=black, fill=green!25, text centered, minimum height=3em, minimum width=3em},
    %         alleqstyle/.style={circle, draw=black, fill=purple!20, text centered, minimum height=3em, minimum width=3em},
    %     }
    
    %     % Split node
    %     \node[splitstyle] (Split) {Split$(i)$};
    
    %     % Incoming nodes
    %     \node[above left of=Split] (In1) {$u^{+}_{ij}$};
    %     \node[above right of=Split] (In2) {$b^{-}_{i}$};
    %     \node[above=0.4cm of Split] (In3) {$v^{+}_{ij}$};
    %     \node[above left=-0.4cm and 0.7cm of Split] (In4) {$f_i$};

    %     % Outgoing nodes
    %     \node[below left of=Split] (Out1) {$u^{-}_{ij}$};
    %     \node[below right of=Split] (Out2) {$b^{+}_{i}$};
    %     \node[below=0.4cm of Split] (Out3) {$v^{-}_{ij}$};
    
    %     % Arrows (incoming edges from the top)
    %     \draw[arrowstyle] (In1) -- (Split);
    %     \draw[arrowstyle] (In2) -- (Split);
    %     \draw[arrowstyle] (In3) -- (Split);
    %     \draw[arrowstyle] (In4) -- (Split);
    %     % Arrows (outgoing edges to the bottom)
    %     \draw[arrowstyle] (Split) -- (Out1);
    %     \draw[arrowstyle] (Split) -- (Out2);
    %     \draw[arrowstyle] (Split) -- (Out3);

    % \end{tikzpicture}
    % \caption{}
    %     \label{fig:final}
    % \end{figure}

This flow model provably captures the optimization's feasible space as there is a one-to-one correspondence between the constraints in the optimization and the constraints enforced by the nodes.

\noindent {\bf How to capture the optimization objective.} We can express the objective of any linear optimization as $~~\max_{{\bf x}, {\bf y}}~~~{\bf c}_{\bf x}^\intercal {\bf x} + {\bf c}_{\bf y}^\intercal {\bf y}$ where ${\bf c}_{\bf x}$ and  ${\bf c}_{\bf y}$ are constant vectors. We can reformulate and add a constraint that enforces $p = {\bf c}_{\bf x}^\intercal {\bf x} + {\bf c}_{\bf y}^\intercal {\bf y}$, so the objective of the optimization changes to maximizing $p$. Then, we can use similar transformations, as we explained before, to capture this constraint within the flow model. We add a sink node that has one incoming edge $p$. This way, we can express any linear optimization objective with our model.

\end{proof}

\clearpage
\fi

\end{document}